\documentclass{article}
\usepackage{iclr2027_conference,times}
\iclrfinalcopy
\usepackage[utf8]{inputenc}
\usepackage[T1]{fontenc}
\usepackage{microtype}
\usepackage{amsmath,amssymb,amsthm,mathtools}
\usepackage{booktabs,multirow}
\usepackage{graphicx}
\usepackage{xcolor}
\usepackage{hyperref}
\usepackage{url}
\usepackage{enumitem}
\usepackage{tikz}
\usepackage{wrapfig}
\usepackage{needspace}
\usepackage{placeins}
\usepackage{adjustbox}
\usepackage{subcaption}
\usepackage{capt-of}
\usepackage{float}

\usetikzlibrary{arrows.meta,backgrounds,calc,fit,positioning,shapes.geometric}

\newcommand{\method}{\textsc{CADT}}
\newcommand{\methodfull}{Concept Attribution through Dynamic Trajectories}

\newcommand{\best}[1]{\textbf{#1}}

\newtheorem{theorem}{Theorem}
\newtheorem{proposition}{Proposition}

\theoremstyle{definition}

\title{Attributing HOW, Not Just WHICH: Counterfactual Response Trajectories for Diffusion Models}

\author{%
  \textbf{Haoqian Zhang\textsuperscript{1}}\quad
  \textbf{Ziyuan Yang\textsuperscript{1}}\quad
  \textbf{Zerui Shao\textsuperscript{1}}\quad
  \textbf{Yi Zhang\textsuperscript{1}}\\
  \normalfont\textsuperscript{1}Sichuan University
}

\begin{document}
\maketitle

\begin{abstract}
Diffusion models have achieved remarkable success in image generation, yet tracing their outputs to individual training examples remains challenging. Existing attribution methods often compress factor-specific effects into scalar responses, making distinct internal changes indistinguishable. This is particularly limiting for diffusion models, where semantic factors emerge through evolving representation dynamics during denoising. We therefore reformulate diffusion data attribution as attributing factor-induced internal response trajectories. In this paper, we propose a novel Concept Attribution method through Dynamic Trajectories(CADT). 
We argue that attribution should therefore ask not only \emph{which} examples matter, but also \emph{how} their influence unfolds during generation.
Specifically, we construct matched counterfactual pairs at identical noisy states to isolate factor-specific representation displacements, and model their directional and magnitude evolution across denoising as dynamic attribution signatures. 
%These stage-wise signatures are integrated across denoising to form trajectory descriptors for both training examples and generated queries. CADT then calibrates these descriptors against the variations observed across the training bank and uses the resulting covariance-aware positive-semidefinite kernel $K_\lambda$ to compare query and training trajectories, producing the final attribution scores. 
For each training example and generated query, CADT extracts stage-wise feature vectors and integrates them along the denoising process to form a trajectory descriptor. Applying the same construction across the training set yields a bank of factor-specific trajectory descriptors. The covariance statistics of this bank are then used to construct a covariance-aware positive-semidefinite kernel. CADT uses this kernel to calibrate the query and training representations, and compares the calibrated query trajectory with each training trajectory to produce the final training-sample attribution scores.
Experiments on multiple public datasets show consistent improvements over existing diffusion attribution baselines across hierarchical, compositional, and style attribution.

\end{abstract}

\section{Introduction}

Diffusion models have emerged as a powerful class of generative models, achieving remarkable progress in high-quality image synthesis and enabling flexible control over generated content \citep{ho2020ddpm,nichol2021improved}. As their behavior is shaped by large training collections, understanding which training examples contribute to a generated output is important for interpreting model behavior, tracing data provenance, and diagnosing memorization~\citep{somepalli2023replication,carlini2023extracting}. This motivates \emph{data attribution} for diffusion models, which aims to trace a generated output back to the training examples that contributed to its formation.

Recent diffusion attribution methods extend influence- and gradient-based attribution to generative models through diffusion-specific objectives, and sampling procedures \citep{dai2023training,georgiev2023journey,park2023trak,
zheng2024dtrak,xie2024diffusionretrac,mlodozeniec2024influence,
lin2025das}. Most, however, ultimately summarize influence through a scalar response associated with the generated image. This is restrictive for factor-specific attribution. Distinct internal responses can collapse to similar scalar values, even when they emerge at different denoising stages or evolve in different directions. Moreover, image-level responses can mix multiple semantic factors. 
% For instance, the object identity and visual style of a watercolor dog may be supported by different subsets of training examples, yet a single attribution response can conflate their effects.

% Concept-TRAK \citep{park2025concept} extends diffusion attribution toward concept-specific analysis, but still represents concept influence through a predefined scalar utility. This scalar view can obscure the evolving internal responses through which semantic factors emerge over the denoising process \citep{luo2023hyperfeatures,tang2023dift}.

We therefore argue that factor-specific attribution should characterize not only \emph{how much} a training example affects a target factor, but also \emph{how the corresponding internal response evolves during generation}. Two training examples can receive similar scalar attribution scores while supporting the target factor through qualitatively different generative processes: one may contribute primarily during early denoising, another during later refinement, and their internal responses may evolve along different representation directions. More importantly, these differences can correspond to different semantic roles. For a generated watercolor dog, for example, one training image may primarily support object identity whereas another supports artistic style, even if their aggregated scalar responses are similar. Collapsing such trajectories into a single value therefore obscures not only when influence arises, but also what aspect of the generated content that influence supports.
% , as further illustrated in Appendix~\ref{app:concept-scalar}.

% We therefore argue that factor-specific attribution should characterize not only \emph{how much} a training example affects a target factor, but also \emph{how the corresponding internal response evolves during generation}. Response trajectories retain information discarded by scalar aggregation: a factor may emerge at different denoising stages, exhibit different response strengths, or evolve along different representation directions. Indeed, we observe that training examples with similar scalar responses can exhibit substantially different internal denoising dynamics, as further illustrated in Appendix~\ref{app:concept-scalar}, where similar scalar responses correspond to distinct temporal contribution patterns and internal response trajectories.

This motivates two requirements for factor-specific diffusion attribution. First, attribution should preserve how the relevant response evolves throughout denoising rather than collapsing it into a single scalar. Second, this response should isolate the target factor from other semantic variations entangled in the same internal representations. We introduce \emph{\methodfull{} (\method)} to jointly satisfy these requirements. At each selected denoising stage, \method{} evaluates the target condition and a matched reference condition at the \emph{same noisy latent}, and takes their representation difference as a counterfactual response. By holding the diffusion state fixed, this comparison suppresses variations shared by the two conditions and isolates the response associated with the target factor. \method{} then traces how this factor-specific response evolves across denoising stages, capturing both its magnitude and directional evolution. Finally, the stage-wise responses are integrated into a trajectory descriptor that characterizes how the target factor progressively emerges during generation, for both training examples and generated queries.

Directly comparing these descriptors, however, can still be dominated by variations that are broadly shared across the training set and are therefore less informative for attribution. \method{} consequently uses the training descriptor bank to adapt the comparison geometry itself. Specifically, it constructs a covariance-aware positive-semidefinite kernel $K_\lambda$ that accounts for shared trajectory variation and emphasizes more discriminative directions. Query and training descriptors are compared under this bank-adaptive metric, and the resulting similarities are converted into attribution scores that identify the training examples most closely associated with the factor-specific generative trajectory. The resulting kernel further admits both a ridge-reconstruction interpretation and a representation-level leave-one-descriptor-out interpretation. We evaluate our proposed \method{} on multiple public datasets, covering compositional, hierarchical, corrupted-source, style, and object--style attribution settings. Across these settings, \method{} consistently improves factor-specific attribution over existing diffusion attribution baselines. Our contributions are summarized as follows:

\begin{itemize}[leftmargin=*]
\item We reformulate factor-specific diffusion attribution in terms of \emph{internal response trajectories}, preserving denoising dynamics that are lost under scalar attribution responses.

\item We construct matched counterfactual responses at identical noisy states to isolate factor-induced internal changes, and model their response magnitude and directional evolution across denoising as dynamic attribution signatures.

\item We integrate these dynamics into trajectory descriptors for training examples and generated queries, and compare them through a training-bank-adaptive covariance-aware kernel to obtain attribution scores across diverse factor-specific attribution settings.
\end{itemize}

\section{Related Work}

\paragraph{Training-data attribution.}
Training-data attribution studies how individual training examples affect model behavior. Classical approaches include leave-one-out retraining, influence functions, representer-based explanations and data valuation \citep{koh2017influence,ghorbani2019shapley,yeh2018representer}. More scalable methods such as TracIn, datamodels, and TRAK approximate training-example influence using optimization trajectories, subset models, or projected gradients \citep{pruthi2020tracin,ilyas2022datamodels,park2023trak}.

\paragraph{Attribution for diffusion models.}
Recent work extends training-data attribution to generative and diffusion models. Diffusion-specific methods estimate influence using denoising-loss gradients, projected features, sampling trajectories, or predictive-distribution changes \citep{dai2023training,georgiev2023journey,zheng2024dtrak,
xie2024diffusionretrac,mlodozeniec2024influence,lin2025das}. These approaches adapt scalable attribution to the iterative denoising structure of diffusion models.

\paragraph{Concept-level and localized explanations.}
Beyond whole-image attribution, recent work studies more fine-grained explanations of diffusion models. Concept-TRAK introduces concept-aware utility functions for concept-level training-data attribution \citep{park2025concept}, while DAAM analyzes token-level conditioning through cross-attention \citep{tang2023daam}. Other approaches investigate spatially localized training-data influence within generated images \citep{nguyen2025region}.

\section{Preliminaries}
\label{sec:preliminaries}

\subsection{Diffusion Models}
\label{sec:diffusion-model}

Diffusion models synthesize data by reversing a predefined noise-corruption process \citep{sohl2015nonequilibrium,ho2020ddpm,song2021score}. Given a clean sample $\mathbf{x}_0$, the forward process gradually adds Gaussian noise over $T$ timesteps. Let $\beta_t\in(0,1)$ denote the variance schedule, $\alpha_t=1-\beta_t$, and $\bar\alpha_t=\prod_{s=1}^{t}\alpha_s$. The noised state at timestep $t$ can be sampled directly as: 
\begin{equation}
\mathbf{x}_t
=
\sqrt{\bar\alpha_t}\mathbf{x}_0
+
\sqrt{1-\bar\alpha_t}\boldsymbol\epsilon,
\qquad
\boldsymbol\epsilon\sim\mathcal N(\mathbf 0,\mathbf I).
\end{equation}

A conditional denoiser $\boldsymbol\epsilon_\theta(\mathbf{x}_t,t,\mathbf c)$ predicts the injected noise under an optional condition $\mathbf c$. In the common noise-prediction parameterization, the model minimizes: 
\begin{equation}
\mathcal L_{\mathrm{simple}}
=
\mathbb E_{\mathbf{x}_0,t,\boldsymbol\epsilon}
\left[
\left\|
\boldsymbol\epsilon-
\boldsymbol\epsilon_\theta(\mathbf{x}_t,t,\mathbf c)
\right\|_2^2
\right].
\end{equation}

At inference time, generation is initialized from $\mathbf{x}_T\sim\mathcal N(\mathbf0,\mathbf I)$ and proceeds through successive reverse denoising transitions to $\mathbf{x}_0$. Deterministic non-Markovian samplers provide an alternative sampling trajectory \citep{song2021ddim}, while latent diffusion performs the same process in the latent space of a pretrained autoencoder \citep{rombach2022high}.

\subsection{Data Attribution}
\label{sec:data-attribution}

Given a training set $\mathcal D=\{x^i\}_{i=1}^{N}$ and model parameters $\theta_{\mathcal D}$ trained on it, data attribution quantifies the contribution of each training example to a model behavior of interest. Let $\mathcal V(x_{\mathrm{test}};\theta)$ denote a scalar utility. A direct counterfactual definition is the leave-one-out effect: 
\begin{equation}
\Delta_i(x_{\mathrm{test}})
=
\mathcal V(x_{\mathrm{test}};\theta_{\mathcal D\setminus i})
-
\mathcal V(x_{\mathrm{test}};\theta_{\mathcal D}),
\label{eq:model-loo}
\end{equation}
where $\theta_{\mathcal D\setminus i}$ denotes the model retrained after removing $x^i$. Repeating this procedure for every training example is computationally prohibitive.

Influence functions approximate this effect through a first-order parameter perturbation \citep{koh2017influence}. Let
$g_i=\nabla_\theta\mathcal L(x^i;\theta_{\mathcal D})$,
$g_{\mathcal V}=\nabla_\theta\mathcal V(x_{\mathrm{test}};\theta_{\mathcal D})$, and
$H=\nabla_\theta^2\mathcal L(\mathcal D;\theta_{\mathcal D})$.
Up to sign and dataset-size conventions, the influence score is: 
\begin{equation}
\mathcal I(x^i,x_{\mathrm{test}})
=
g_{\mathcal V}^{\top}H^{-1}g_i.
\label{eq:if}
\end{equation}
Here, $H^{-1}g_i$ approximates the parameter change induced by reweighting the training example, while $g_{\mathcal V}$ measures the effect of that change on the target utility.

% \subsection{Data Attribution for Diffusion Models}
% \label{sec:diffusion-attribution}

For diffusion models, the target behavior is produced through a sequence of denoising states, making the generation process inherently trajectory-dependent. Consequently, training-data relevance may vary across diffusion time and internal network representations. Diffusion data attribution therefore seeks to identify the training examples associated with a generated output while accounting for the structured, multi-step nature of the generation process.

\section{Method}
\label{sec:method}

\begin{figure*}[t]
    \centering
    \includegraphics[width=.8\textwidth]{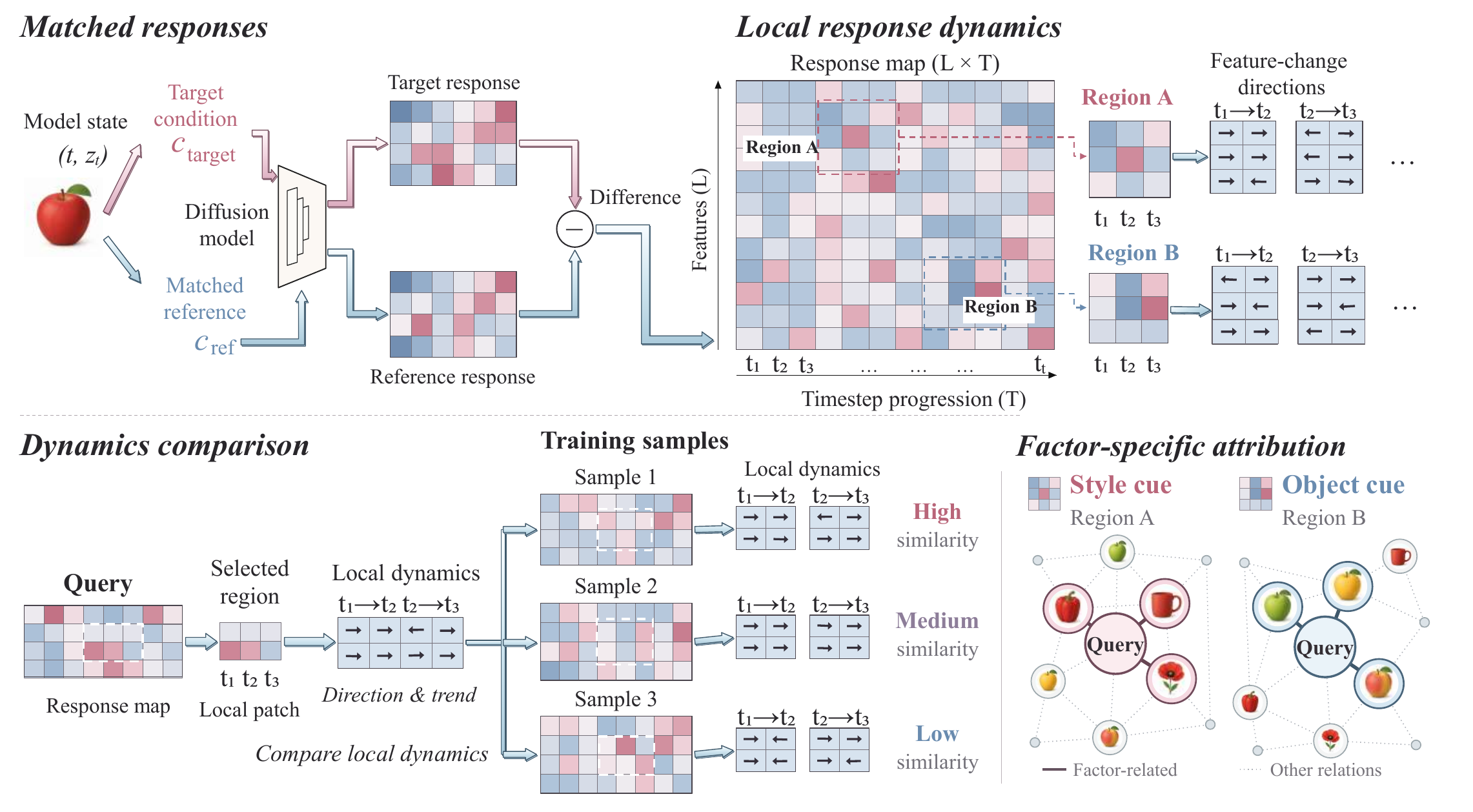}
    \vspace{-10pt}
    \caption{
        The overview of our proposed \method{}. Throughout the experiments, the term "style" denotes either color or image style.
        % : matched counterfactual responses are converted
        % into dynamic trajectory representations and compared under a
        % training-bank-adaptive kernel for factor-specific attribution.
    }
    \vspace{-15pt}
    \label{fig:rda-overview}
\end{figure*}

\subsection{Overview}
\label{sec:dynamic-attribution}
We introduce \emph{\methodfull{}} (\method), a trajectory-aware framework for
factor-specific data attribution in diffusion models. Let $f_\theta$ denote a
diffusion model trained on $\mathcal D=\{(x_i,y_i)\}_{i=1}^{N}$ and fixed
during attribution. Given a generated query $q$ and a target factor $c$, our
goal is to rank training examples according to their relevance to the
occurrence of $c$ in $q$. Specifically, \method{} proceeds in three stages. First, it constructs factor-specific counterfactual responses by comparing target and reference conditions at
matched noisy states. Second, stage-wise dynamic features are extracted from
these responses and concatenated over selected denoising timesteps into
trajectory representations. Finally, local descriptor blocks are calibrated
using the training bank and compared through a covariance-aware
positive-semidefinite kernel. 
% Detailed definitions of the dynamic features are provided in Appendix~\ref{app:dynamic-descriptor}.

\subsection{Factor-Specific Counterfactual Response}
\label{sec:counterfactual-response}

Let $y_c$ denote the conditioning input associated with the target factor $c$. We define $y_{\mathrm{ref}(c)}$ as its matched counterfactual reference, obtained by replacing the target factor $c$ with a predefined reference value while keeping all other conditioning information unchanged.
Let \(\{t_j\}_{j=0}^{T-1}\) denote the recorded diffusion timesteps, where \(t_j\) is the \(j\)-th step and \(T\) is the total number of recorded steps,
$s\in\{1,\ldots,S\}$ index repeated analysis probes, and
$\ell\in\mathcal{I}_{\mathrm{layer}}$ index the selected network stages. For sample $x$,
let $x_{t_j}^{(s)}$ denote the corresponding noisy state under probe $s$. Given a noisy state $x_{t_j}^{(s)}$, timestep $t_j$, and conditioning input $y_c$, we perform a standard denoiser forward pass and expose the intermediate
representations at the selected network stages as:
\begin{equation}
\left(
\hat{\epsilon}_\theta(x_{t_j}^{(s)},t_j,y_c),
\left\{H_\ell(x_{t_j}^{(s)},t_j,y_c)\right\}_{\ell\in\mathcal{I}_{\mathrm{layer}}}
\right)
=
f_\theta^{\mathrm{feat}}(x_{t_j}^{(s)},t_j,y_c),
\label{eq:feature-forward}
\end{equation}
where $f_\theta^{\mathrm{feat}}$ denotes the original denoiser with access to selected intermediate features. Each activation is extracted as
$H_\ell(x_{t_j}^{(s)},t_j,y_c)\in\mathbb R^{P_\ell\times C_\ell}$,
where $P_\ell$ and $C_\ell$ denote the token and channel dimensions, respectively.
% where $f_\theta^{\mathrm{feat}}$ denotes the original denoiser augmented only
% to expose its selected intermediate representations. Each recorded activation
% is converted by the same fixed representation extraction procedure into
% % \begin{equation}
% $H_\ell(x_{t_j}^{(s)},t_j,y_c)\in\mathbb R^{P_\ell\times C_\ell}$,
% % \end{equation}
% where $P_\ell$ is the token count and $C_\ell$ denote channel dimensions at layer $\ell$.

At each recorded state, we evaluate this representation under the target
condition $y_c$ and the matched reference condition
$y_{\mathrm{ref}(c)}$ while keeping the noisy state fixed. We define the
factor-specific counterfactual response as:
\begin{equation}
\mathcal R_{c,s}^{\ell}(x,t_j)
=
H_\ell(x_{t_j}^{(s)},t_j,y_c)
-
H_\ell(x_{t_j}^{(s)},t_j,y_{\mathrm{ref}(c)}).
\label{eq:counterfactual-response}
\end{equation}

Both evaluations share the same protocol, only the conditioning signal changes. Their difference therefore provides a local counterfactual response associated with the target
factor while controlling for the underlying diffusion state. Repeating this
matched intervention over the recorded timesteps, selected layers, and
analysis probes yields the factor-specific response sequence
$\{\mathcal R_{c,s}^{\ell}(x,t_j)\}_{s,\ell,j}$, from which we construct the
dynamic trajectory representation.

\subsection{Dynamic Trajectory Construction}
\label{sec:dynamic-signature}

Given the counterfactual responses obtained above, we construct two
complementary dynamic components directly from the response sequence. For each
layer $\ell$ and timestep $t_j$, the response-strength component is computed
from the counterfactual responses at the endpoint state as:
\begin{equation}
\psi^E_{\ell,t_j}(x)
=
\mathcal E\!\left(
\left\{
\mathcal R_{c,s}^{\ell}(x,t_{j})
\right\}_{s=1}^{S}
\right),
\label{eq:energy-from-response}
\end{equation}
while the directional component is computed from the evolution of the
counterfactual responses across the interval as:
\begin{equation}
\psi^V_{\ell,t_j,k}(x)
=
\mathcal V_k\!\left(
\left\{
\mathcal R_{c,s}^{\ell}(x,t_j),
\mathcal R_{c,s}^{\ell}(x,t_{j+1})
\right\}_{s=1}^{S}
\right),
\label{eq:velocity-from-response}
\end{equation}
where $\mathcal E$ extracts response strength and $\mathcal V_k$ extracts the
first-order directional evolution of channel $k$. 

Let $\boldsymbol{\tau}
=
(\tau_1,\tau_2,\ldots,\tau_M)$ denote the ordered sequence of selected timesteps along the denoising
trajectory, where \(M\) is the number of selected stages. For layer \(\ell\)
and selected timestep \(\tau_m\), we construct the stage-wise dynamic feature
as:
\begin{equation}
\psi_{\ell,\tau_m}(x)
=
\operatorname{Concat}
\left(
\psi^{E}_{\ell,\tau_m}(x),
\psi^{V}_{\ell,\tau_m,1}(x),
\ldots,
\psi^{V}_{\ell,\tau_m,C_\ell}(x)
\right),
\label{eq:stage-dynamic-feature}
\end{equation}
where
\(\psi^E_{\ell,\tau_m}(x)\in\mathbb R^{C_\ell}\)
denotes the response-strength feature at timestep \(\tau_m\), and
\(\psi^V_{\ell,\tau_m,k}(x)\in\mathbb R^{P_\ell}\)
denotes the directional feature for channel \(k\) over the corresponding
denoising transition. The complete definitions of these components, including
their temporal differences, multi-probe stability, and second-order
reliability, are provided in
\textbf{Appendix}~\ref{app:dynamic-descriptor}.

The stage-wise features are then concatenated following the denoising order to
form the layer-specific trajectory descriptor as:
\begin{equation}
z_\ell(x)
=
\operatorname{Concat}
\left(
\psi_{\ell,\tau_1}(x),
\psi_{\ell,\tau_2}(x),
\ldots,
\psi_{\ell,\tau_M}(x)
\right).
\label{eq:layer-trajectory}
\end{equation}

Similarly, let $\boldsymbol{\ell}
=
(\ell_1,\ell_2,\ldots,\ell_L)$ denote the ordered sequence of selected layers. The corresponding
multi-layer trajectory representation is defined as:
\begin{equation}
\Phi_c(x)
=
\operatorname{Concat}
\left(
z_{\ell_1}(x),
z_{\ell_2}(x),
\ldots,
z_{\ell_L}(x)
\right).
\label{eq:trajectory-representation}
\end{equation}

This construction preserves the ordering of dynamic information
across denoising stages rather than averaging it over time, and is applied
identically to training examples and generated queries.

\subsection{Training-Bank Adaptive Comparison}
\label{sec:bank-geometry}

Direct comparison of the complete trajectory representations can be dominated
by variations broadly expressed across the training set. We therefore
calibrate local feature blocks using statistics estimated from the training
bank. Let $\mathcal B=\mathcal B_E\cup\mathcal B_V$ denote the set of kernel blocks,
where $\mathcal B_E$ contains response-strength blocks and $\mathcal B_V$
contains directional blocks. Specifically, an energy block contains the
channel-wise feature $\psi^E_{\ell,t_j}(x)$ for a fixed layer and state, whereas
a directional block contains $\psi^V_{\ell,t_j,k}(x)$ for a fixed layer,
timestep, and channel.

For $b\in\mathcal B$, let $\Pi_b$ denote the coordinate-selection operator that
extracts the corresponding raw block from $\Phi_c(x)$, which can be formulated as
% \begin{equation}
$\xi_b(x)
=
\Pi_b\Phi_c(x)$.
\label{eq:raw-kernel-block}
% \end{equation}
We use $\Pi_b$ rather than $P_b$ to distinguish this operator from the spatial
dimension $P_\ell$.

Each block is centered and standardized using statistics computed exclusively
from the training examples. Let $\mathcal S_b$ denote this frozen
standardization operator, including the removal of coordinates with negligible
training variance, and define
% \begin{equation}
$h_b(x)
=
\mathcal S_b\!\left(\xi_b(x)\right)
\in\mathbb R^{d_b}$,
\label{eq:kernel-block}
% \end{equation}
where $d_b$ is the number of retained coordinates in block $b$.

The standardized training bank for this block can be formulated as:
\begin{equation}
Z_b
=
\begin{bmatrix}
h_b(x_1)^\top\\
\vdots\\
h_b(x_N)^\top
\end{bmatrix}
\in\mathbb R^{N\times d_b}.
\label{eq:block-bank}
\end{equation}
Because $h_b$ is centered using training-bank statistics,
$\frac{1}{N} Z_b^\top Z_b$ corresponds to the empirical covariance of the
standardized block features. We define the regularized inverse: 
\begin{equation}
G_b
=
\left(
Z_b^\top Z_b+\lambda I_{d_b}
\right)^{-1},
\qquad
\lambda>0,
\label{eq:bank-geometry}
\end{equation}
where $I_{d_b}$ is the $d_b$-dimensional identity matrix and $\lambda$
controls regularization.

The final attribution score between query $q$ and training example $x_i$ is: 
\begin{equation}
\boxed{
K_\lambda(q,x_i)
=
\sum_{b\in\mathcal B}
w_b\,
h_b(q)^\top
G_b
h_b(x_i)
},
\qquad
w_b\ge0.
\label{eq:rda-kernel}
\end{equation}
Here, $w_b$ denotes a fixed block weight used to balance contributions from
different dynamic components and blocks. In the full method, the response-
strength and directional feature families receive equal total weight; the
exact weighting scheme is given in \textbf{Appendix}~\ref{app:dynamic-descriptor}.

$G_b$ calibrates each local component against the variation observed in the corresponding training block, and the resulting blockwise interactions are aggregated to rank the training examples. All training-bank statistics, standardization operators, and $G_b$ are fitted once on the training set and remain fixed during query attribution.

\subsection{Representation-Level Removal Interpretation}
\label{sec:removal-connection}

Because each training example contributes a descriptor to every block bank,
the attribution score also admits a deletion-based interpretation. While
deletion and retraining provide a model-level counterfactual
(\textbf{Appendix}~\ref{app:artbench-deletion}), here we ask how removing
$x_i$ affects query reconstruction in the fixed representation space.

For block $b$, consider the ridge reconstruction objective: 
\begin{equation}
E_b(q)
=
\min_{a_b\in\mathbb R^N}
\|h_b(q)-Z_b^\top a_b\|_2^2
+
\lambda\|a_b\|_2^2,
\label{eq:block-reconstruction}
\end{equation}
where $a_b$ contains the reconstruction coefficients. If $a_b^\star$ is the
minimizer, its $i$-th entry is: 
\begin{equation}
a_{ib}^{\star}
=
h_b(q)^\top G_b h_b(x_i),
\label{eq:block-ridge-coefficient}
\end{equation}
which is exactly the blockwise interaction underlying $K_\lambda$.

Removing $x_i$ from each block bank changes the optimal reconstruction
objective by: 
\begin{equation}
\Delta_i^{\mathrm{dyn}}(q)
=
\lambda
\sum_{b\in\mathcal B}
w_b
\frac{
\left[h_b(q)^\top G_b h_b(x_i)\right]^2
}{
1-h_b(x_i)^\top G_b h_b(x_i)
}.
\label{eq:dyn-removal-brief}
\end{equation}
Thus, the interactions used by $K_\lambda$ also determine an exact
leave-one-descriptor-out effect, distinct from deleting a training example and
retraining the diffusion model. The full derivation is provided in
\textbf{Appendix}~\ref{app:theory}.

\section{Experiments}
\label{sec:experiments}

\subsection{Evaluation protocol and scope}

\begin{wrapfigure}[10]{r}{0.5\linewidth}
    \centering
    \vspace{-8mm}
    \includegraphics[
        width=\linewidth,
        trim={0cm 5cm 0cm 1.5cm},
        clip
    ]{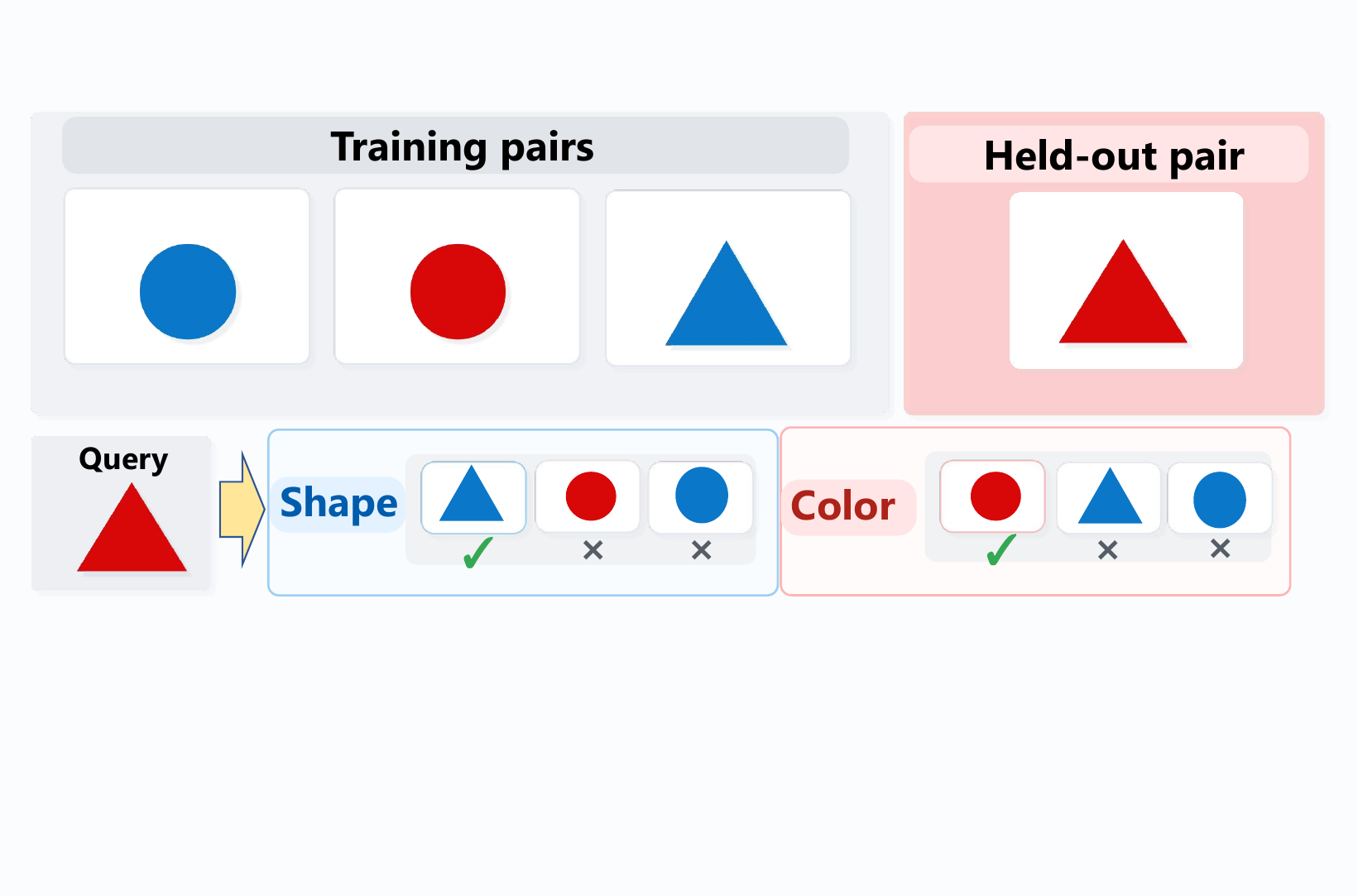}
    \vspace{-15pt}
    \caption{Synthetic setup for shape and color attribution.}
    \label{fig:synthetic-setup}
    % \vspace{-5mm}
\end{wrapfigure}

We evaluate \method{} on controlled synthetic data, CIFAR-100, and ArtBench, and compare against Concept-TRAK \citep{park2025concept}, TRAK \citep{park2023trak}, D-TRAK \citep{zheng2024dtrak}, DAS \citep{lin2025das}, and Journey-TRAK \citep{georgiev2023journey}. Within each experiment, all methods use the same checkpoint, candidate pool, queries, relevance targets, and validity mask; baseline settings follow disjoint validation data or published protocols. We report P@$K$, the fraction of top-$K$ candidates matching the target factor. The implementation details are provided in
\textbf{Appendix}~\ref{app:protocol}.

\subsection{Controlled Experiments on Synthetic Data}
\label{sec:synthetic}

% We train a conditional diffusion U-Net on blue triangles, blue circles, and
% red circles, holding out red triangles. Shape and color conditions are
% factorized, allowing a red-triangle query to be attributed separately by
% contrasting triangle vs.\ circle or red vs.\ blue while preserving the other
% factor. Figure~\ref{fig:synthetic-setup} illustrates this controlled setup and
% the corresponding factor-specific attribution targets. TRAK, D-TRAK, and DAS return a single whole-image ranking, whereas
% \method{} constructs factor-specific response trajectories using the
% target/reference procedure in Section~\ref{sec:counterfactual-response}.
% Extraction, query splitting, and support-selection details are provided in
% \textbf{Appendix}~\ref{app:protocol}.
This synthetic experiment tests whether attribution can separate the influence of different semantic factors within a compositional query. We train a conditional diffusion U-Net on blue triangles, blue circles, and red circles, while holding out red triangles. Because shape and color are factorized, each factor can be changed while keeping the other fixed. In each matched target--reference pair, effects associated with the preserved factor are shared and largely cancel, leaving a response dominated by the changed factor. This yields distinct shape- and color-specific attribution signals, and Figure~\ref{fig:synthetic-setup} illustrates the setup and corresponding attribution targets. The details are provided in \textbf{Appendix}~\ref{app:protocol}.

\begin{wraptable}[9]{l}{0.5\columnwidth}
% \vspace{-9pt}
\centering
\caption{Synthetic held-out OOD precision (\%).}
\vspace{-5pt}
\label{tab:synthetic}
\scriptsize
\setlength{\tabcolsep}{1.8pt}
\renewcommand{\arraystretch}{0.92}
\resizebox{\linewidth}{!}{
\begin{tabular}{lrrrrrrrrr}
\toprule
& \multicolumn{3}{c}{\textbf{Shape}}
& \multicolumn{3}{c}{\textbf{Color}}
& \multicolumn{3}{c}{\textbf{Avg.}} \\
\cmidrule(lr){2-4}
\cmidrule(lr){5-7}
\cmidrule(lr){8-10}
Method
& P@1 & P@5 & P@10
& P@1 & P@5 & P@10
& P@1 & P@5 & P@10 \\
\midrule
TRAK
& 71.20 & 41.10 & 33.64
& 26.20 & 47.54 & 50.96
& 48.70 & 44.32 & 42.30 \\
D-TRAK
& 0.30 & 0.68 & 1.00
& \best{99.70} & \best{99.32} & \best{99.00}
& 50.00 & 50.00 & 50.00 \\
DAS
& 65.30 & 43.44 & 37.27
& 34.70 & 53.70 & 50.52
& 50.00 & 48.57 & 43.90 \\ \hline
\method{}
& \best{91.20} & \best{91.36} & \best{92.01}
& 90.80 & 93.88 & 93.65
& \best{91.00} & \best{92.62} & \best{92.83} \\
\bottomrule
\end{tabular}
}

\vspace{-5pt}
\end{wraptable}

% Table~\ref{tab:synthetic} shows that \method{} retrieves shape and color
% consistently, whereas whole-image baselines exhibit strong factor preference:
% TRAK primarily favors shape, while D-TRAK favors color. This contrast
% demonstrates the ambiguity of using a single ranking when multiple factors
% coexist in the same generation.
Table~\ref{tab:synthetic} shows that \method{} achieves consistently high precision for both shape and color, whereas whole-image baselines exhibit strong factor imbalance. D-TRAK nearly perfectly retrieves color but fails on shape, while TRAK and DAS are substantially more shape-biased. This indicates that a single attribution ranking can be dominated by one factor, whereas factor-specific contrasts provide more balanced and disentangled attribution.

\begin{wrapfigure}[17]{r}{0.48\linewidth}
    \centering
    \vspace{-4mm}

    \includegraphics[
        width=0.90\linewidth
    ]{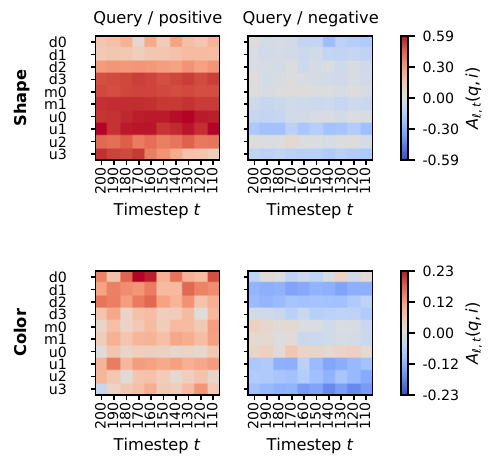}

    \vspace{-5pt}
    \caption{
    Signed alignment of the query with positive and negative training examples.
    }
    \label{fig:synthetic-local-alignment}

    \vspace{-4mm}
\end{wrapfigure}

% A key challenge in factor-specific attribution is identifying training examples that consistently support the target factor throughout denoising. 
% Figure~\ref{fig:synthetic-local-alignment} shows that positive examples exhibit stronger directional agreement with the query, while negative examples show weaker or opposing dynamics, explaining why \method{} benefits from modeling trajectory direction.
Figure~\ref{fig:synthetic-local-alignment} shows that positive training examples maintain stronger and more consistent directional agreement with the query across layers and timesteps, whereas negative examples exhibit weaker or frequently opposing responses. This distinction indicates that attribution evidence is carried not only by the presence of a response, but also by whether its direction remains compatible with the query throughout denoising. By explicitly modeling this directional evolution, \method{} can preserve the temporal consistency of factor-specific responses and better separate supportiv training examples from conflicting ones.

\begin{figure*}[!b]
    \centering

    \begin{minipage}[t]{0.48\textwidth}
        \centering
        \includegraphics[width=0.92\linewidth]
        {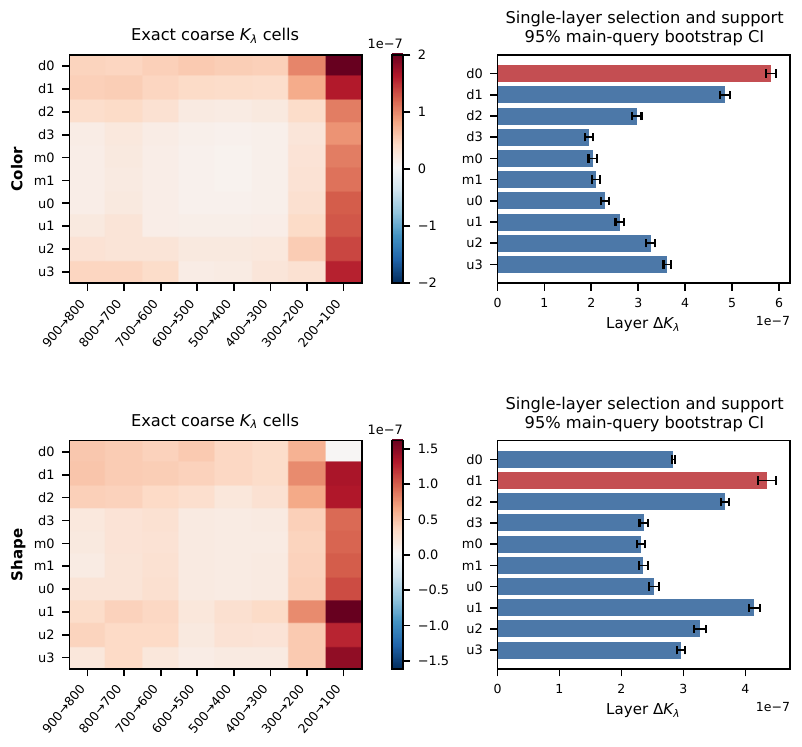}

        \vspace{-5mm}
        \begin{minipage}[t]{0.46\linewidth}
            \centering
            \footnotesize \textbf{(a)} Layer--timestep support
        \end{minipage}
        \hfill
        \begin{minipage}[t]{0.46\linewidth}
            \centering
            \footnotesize \textbf{(b)} Single-layer support
        \end{minipage}

        \vspace{-1mm}
        \captionof{figure}{
        Layer-wise attribution support of $K_\lambda$ for color and shape.
        \textbf{(a)} Coarse layer--timestep contributions.
        \textbf{(b)} Aggregated single-layer support with bootstrap confidence timestep.
        }
        \label{fig:synthetic-layer-support}
    \end{minipage}
    \hfill
    \begin{minipage}[t]{0.48\textwidth}
        \centering
        \includegraphics[width=0.92\linewidth]
        {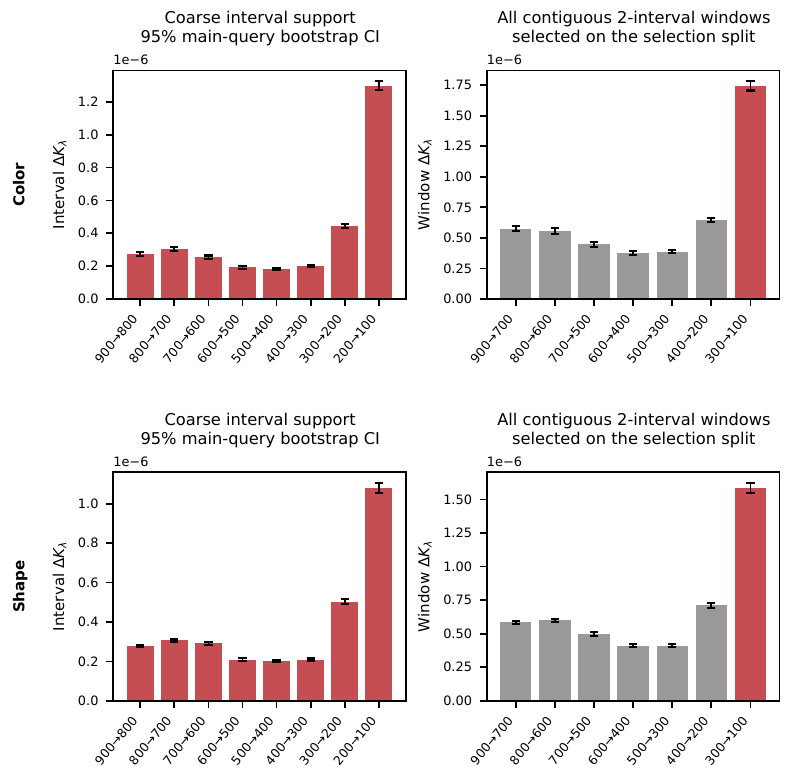}

        \vspace{-1.5mm}
        \begin{minipage}[t]{0.46\linewidth}
            \centering
            \footnotesize \textbf{(a)} Interval support
        \end{minipage}
        \hfill
        \begin{minipage}[t]{0.46\linewidth}
            \centering
            \footnotesize \textbf{(b)} Window support
        \end{minipage}

        \vspace{-1mm}
        \captionof{figure}{
        Temporal attribution support of $K_\lambda$ for color and shape.
        \textbf{(a)} Support over individual coarse diffusion timestep.
        \textbf{(b)} Support over contiguous two-timestep windows.
        }
        \label{fig:synthetic-temporal-support}
    \end{minipage}

    \vspace{-4mm}
\end{figure*}

We further find that factor-specific evidence is distributed across both network depth and denoising time, rather than localized to a single representation. Figure~\ref{fig:synthetic-layer-support} reveals distinct layer-wise support patterns for color and shape, while Figure~\ref{fig:synthetic-temporal-support} identifies the denoising interval from (t=300) to (t=100) as the strongest temporal region for both.
% Because the same factor-specific target--reference contrast can be evaluated at each layer and timestep, \method{} can trace how the corresponding evidence emerges and evolves throughout denoising, and aggregate the informative responses across these representations.
After obtaining a factor-specific response at each denoising stage, \method{} organizes these stage-wise responses into an ordered trajectory within each layer. It then models how their magnitude and direction evolve between successive timesteps, converting isolated local responses into a dynamic signature that spans the denoising process.

\subsection{Hierarchical Concept Attribution}
\label{sec:cifar}

\begin{wraptable}[8]{r}{0.4\columnwidth}
\vspace{-15pt}
\centering
\caption{CIFAR-100 P@5 (\%).}
\vspace{-5pt}
\label{tab:cifar}
\footnotesize
% \setlength{\tabcolsep}{3.8pt}
% \renewcommand{\arraystretch}{0.98}
% \begin{adjustbox}{max width=\linewidth}
\begin{tabular}{lrr}
\toprule
Method & Mid-level & Fine class \\
\midrule
D-TRAK & 45.30 & 18.54 \\
TRAK & 41.35 & 15.30 \\
DAS & 11.95 & 1.62 \\
Journey-TRAK & 10.76 & 1.95 \\ \hline
\method{} & \best{50.59} & \best{21.68} \\
\bottomrule
\end{tabular}
% \end{adjustbox}
% \vspace{-10pt}
\end{wraptable}
Beyond the controlled synthetic setting, we further evaluate \method{} on the more complex and realistic CIFAR-100 benchmark \citep{krizhevsky2009cifar}. We train the diffusion model using only coarse-grained labels and evaluate attribution retrieval at both the mid-level superclass and fine-class levels, testing whether meaningful hierarchical attribution can emerge beyond the supervision granularity seen during training. We use a fixed candidate bank with disjoint query and validation banks. Further architecture and extraction details are provided in \textbf{Appendix}~\ref{app:protocol}.

% Table~\ref{tab:cifar} compares \method{} with published baselines on the same queries. \method{} performs best at both hierarchy levels, with a clearer separation at the superclass level. The improvement over gradient-based baselines supports the combination of factor-conditioned dynamics.

Table~\ref{tab:cifar} compares \method{} with published baselines on the same queries. \method{} achieves the strongest retrieval at both hierarchy levels, with a larger margin at the superclass level, where semantically related subclasses share more coherent response dynamics. Fine-class attribution is more challenging because it requires distinguishing samples within the same coarse training category, yet \method{} retains a clear advantage despite the absence of fine-grained supervision during diffusion training. This suggests that the factor-conditioned trajectories capture hierarchical structure beyond the training labels, while the bank-adaptive comparison helps preserve the more discriminative dynamic variations needed for finer attribution.

% Prevent the following wide table from floating above this subsection.
\FloatBarrier

% =========================================================
% CIFAR-100 label-source recovery
% =========================================================

\subsection{Counterfactual label-source recovery}
\label{sec:cifar-label-flip}

\begin{wrapfigure}{r}{0.5\linewidth}
  \centering
  \vspace{-4mm}
  \includegraphics[width=\linewidth]{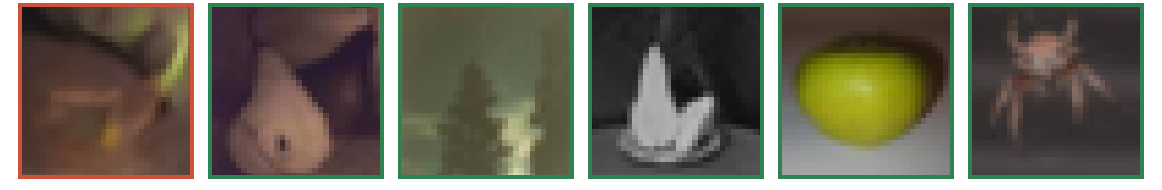}\\[-0.5mm]
  \includegraphics[width=\linewidth]{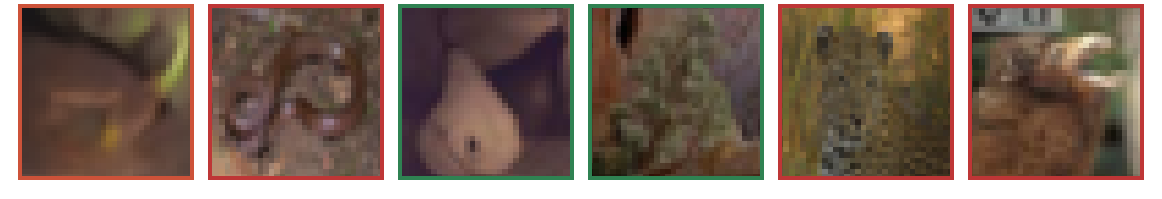}\\[-0.5mm]
  \vspace{-2mm}
  \caption{Label-corruption retrieval for the same query, showing top-five training examples. Red: corrupted animals; green: genuine plants. Top: D-TRAK; bottom: \method{}.}
  \vspace{-15pt}
  \label{fig:cifar-label-flip}
\end{wrapfigure}

Having evaluated hierarchical attribution under clean supervision, we next stress-test whether the retrieved sources remain meaningful when training labels conflict with visual content. We relabel a subset of animal images as plant and ask whether a plant-conditioned generation can be attributed not only to genuine plant examples but also to the deliberately mislabeled animal sources. This setting distinguishes label-driven attribution from attribution that reflects the actual training influence encoded in the model. Relevance is defined by the original group before corruption.

% As shown in Table~\ref{tab:cifar-label-flip} and Fig.~\ref{fig:cifar-label-flip}, \method{} consistently retrieves genuine plant sources while also recovering corrupted animal sources. These results indicate that the modeled response trajectories retain provenance information beyond the observed labels, although corrupted sources remain more difficult to recover.
Table~\ref{tab:cifar-label-flip} shows that \method{} maintains strong recovery for both source types under corrupted supervision. It achieves high precision on genuine plants while recovering substantially more flipped animal sources than most baselines, leading to the strongest overall balance across the two groups. This indicates that the attribution signal is not determined solely by the observed plant label: the internal denoising response trajectories clearly reflect the underlying source identity even when the observed label conflicts with the original source. \method{} directly exploits this property by modeling these response trajectories for attribution.

Figure~\ref{fig:cifar-label-flip} provides a qualitative view of the same behavior. For a single plant-conditioned query, \method{} retrieves both genuine plant examples and animals deliberately relabeled as plant. The retrieved mixture illustrates how the dynamic attribution signal can trace both the observed conditioning association and the corrupted label.

\begin{center}
\begin{minipage}{\linewidth}
\centering
\vspace{-5PT}
\captionof{table}{CIFAR-100 20\% label-flip source recovery (\%). Values within each cell are P@$1/5/10/50$.}
\label{tab:cifar-label-flip}

\vspace{-5pt}
\small
\setlength{\tabcolsep}{3.8pt}

\resizebox{\linewidth}{!}{
\begin{tabular}{lccc}
\toprule
Method
& Avg.
& Flipped animal $\rightarrow$ plant
& Genuine plant \\
\midrule

TRAK
& 56.22 / 52.00 / 50.14 / 44.27
& 23.24 / 21.95 / 20.00 / 16.40
& 89.19 / 82.05 / 80.27 / 72.14 \\

D-TRAK
& 63.78 / 59.89 / 57.68 / 49.12
& \best{40.54} / 35.68 / 33.51 / 26.30
& 87.03 / 84.11 / 81.84 / 71.94 \\

DAS
& 24.05 / 21.46 / 19.89 / 22.28
& 1.08 / 0.76 / 1.03 / 4.17
& 47.03 / 42.16 / 38.76 / 40.39 \\

Journey-TRAK
& 1.35 / 6.59 / 22.57 / 21.75
& 1.08 / 1.51 / 13.51 / 9.01
& 1.62 / 11.68 / 31.62 / 34.49 \\

\midrule

\method{}
& \best{67.57} / \best{66.86} / \best{66.51} / \best{64.48}
& 37.84 / \best{38.27} / \best{37.51} / \best{34.42}
& \best{97.30} / \best{95.46} / \best{95.51} / \best{94.53} \\

\bottomrule
\end{tabular}
}
\vspace{-5pt}
\end{minipage}
\end{center}

\subsection{Style and object--style attribution}
\label{sec:artbench}

% \Needspace{0.25\textheight}
\begin{wraptable}[12]{r}{0.45\columnwidth}
\vspace{-15pt}
\centering
\caption{Generated-query ArtBench style precision (\%).}
\vspace{-5pt}
\label{tab:artbench}
\scriptsize
\setlength{\tabcolsep}{2.3pt}
\renewcommand{\arraystretch}{0.90}
\resizebox{\linewidth}{!}{
\begin{tabular}{llrrrr}
\toprule
Dataset & Method & P@1 & P@5 & P@10 & P@50 \\
\midrule
\multirow{5}{*}{ArtBench2}
& TRAK & 98.54 & 97.49 & 96.58 & 89.15 \\
& D-TRAK & 97.80 & 97.72 & 96.99 & 94.50 \\
& Journey-TRAK & 98.01 & 96.76 & 95.91 & 89.07 \\
& DAS & 99.37 & 99.46 & 99.32 & 99.03 \\ 
\cmidrule(lr){2-6}
& \method{} & \best{100.00} & \best{100.00} & \best{100.00} & \best{100.00} \\
\midrule
\multirow{5}{*}{ArtBench5}
& TRAK & 90.01 & 87.18 & 84.36 & 70.36 \\
& D-TRAK & 63.00 & 62.96 & 62.57 & 58.32 \\
& Journey-TRAK & 89.19 & 83.59 & 78.92 & 62.32 \\
& DAS & 96.43 & 95.49 & 94.52 & 91.28 \\ 
\cmidrule(lr){2-6}
& \method{} & \best{100.00} & \best{100.00} & \best{100.00} & \best{99.94} \\
\bottomrule
\end{tabular}
}
\vspace{-25pt}
\end{wraptable}

We further evaluate \method{} on ArtBench \citep{liao2022artbench}, extending the study to a more complex text-conditioned diffusion setting with two- and five-style subsets. We consider both pure-style and object--style attribution, testing whether the method remains effective when multiple semantic factors coexist in the same generation. 
% The results can be found in Tab. xxx. It can be seen that . The qualitative results are illustrated in Fig. xxx.
The quantitative results are reported in Tables~\ref{tab:artbench} and \ref{tab:artbench-composition}. It can be seen that \method{} nearly saturates generated-query style retrieval on both subsets and also achieves strong performance in the more challenging object--style setting, including the more diverse five-style split. This advantage is consistent with the target-specific response trajectories of \method{}, which separate style-directed and object-directed evidence instead of forcing multiple factors into a single whole-image ranking. %Figure~\ref{fig:artbench-retrieval} provides representative qualitative retrieval results, showing that the retrieved training examples consistently match the target style and complementing the quantitative comparisons.

% While whole-image baselines use a single ranking across targets, \method{} constructs separate style- and object-directed attribution signals from their corresponding response trajectories. Dataset metadata defines style relevance, and all methods share the same validity mask and training order.

\begin{table}[H]
\centering
\caption{ArtBench composition precision (\%) for style- and object-specific attribution.}
\label{tab:artbench-composition}
\scriptsize
\setlength{\tabcolsep}{2.5pt}
\renewcommand{\arraystretch}{0.92}

\begin{adjustbox}{max width=\columnwidth}
\begin{tabular}{lrrrrrrrrrrrr}
\toprule
& \multicolumn{6}{c}{\textbf{ArtBench2}}
& \multicolumn{6}{c}{\textbf{ArtBench5}} \\
\cmidrule(lr){2-7}\cmidrule(lr){8-13}

& \multicolumn{3}{c}{\textbf{Style}}
& \multicolumn{3}{c}{\textbf{Object}}
& \multicolumn{3}{c}{\textbf{Style}}
& \multicolumn{3}{c}{\textbf{Object}} \\
\cmidrule(lr){2-4}\cmidrule(lr){5-7}
\cmidrule(lr){8-10}\cmidrule(lr){11-13}

Method
& P@1 & P@5 & P@10 & P@1 & P@5 & P@10
& P@1 & P@5 & P@10 & P@1 & P@5 & P@10 \\
\midrule

TRAK
& 90.91 & 90.91 & 88.18
& 0.00 & 3.64 & 3.64
& 23.33 & 20.00 & 19.67
& 0.00 & 0.00 & 1.67 \\

D-TRAK
& \best{100.00} & 90.91 & 84.55
& 0.00 & 0.00 & 1.82
& 10.00 & 25.33 & 21.33
& 0.00 & 0.00 & 0.00 \\

Journey-TRAK
& 90.91 & 92.73 & 94.55
& 0.00 & 0.00 & 0.91
& 23.33 & 16.67 & 16.67
& 0.00 & 2.00 & 3.00 \\

DAS
& 72.73 & 83.64 & 85.45
& 0.00 & 0.00 & 3.64
& 10.00 & 18.67 & 20.33
& 0.00 & 0.67 & 0.67 \\ \hline

\method{}
& \best{100.00} & \best{100.00} & \best{100.00}
& \best{9.09} & \best{9.09} & \best{6.36}
& \best{90.00} & \best{90.67} & \best{88.67}
& \best{30.00} & \best{18.67} & \best{14.67} \\

\bottomrule
\end{tabular}
\end{adjustbox}
\end{table}

% \begin{wrapfigure}[14]{r}{0.55\linewidth}
%     \centering
%     \vspace{-2mm}
%     \includegraphics[width=\linewidth]{assets/retrieval_ArtBench2_cdka-k_generated-style_post_impressionism_sample_q500.jpg}\\[1mm]
%     \includegraphics[width=\linewidth]{assets/retrieval_ArtBench5_cdka-k_generated-style_baroque_sample_q800.jpg}
%     % \vspace{-2mm}
%     \vspace{-10pt}
%     \caption{Representative style retrievals produced by \method{}. Top row: ArtBench2, post-impressionism. Bottom row: ArtBench5, baroque. Blue: query; green: style-relevant training candidate.}
%     \label{fig:artbench-retrieval}
% \end{wrapfigure}

% \method{} nearly saturates generated-query style retrieval on both ArtBench subsets and remains effective in the more challenging object--style setting, including the five-style split. Its advantage comes from constructing target-specific response trajectories, which separate style- and object-directed evidence instead of forcing both factors into a single whole-image ranking. Figures~\ref{fig:artbench-retrieval} illustrate how these factor-specific rankings map to concrete training examples, complementing the quantitative results in Tables~\ref{tab:artbench} and \ref{tab:artbench-composition}.

\FloatBarrier
\par
\WFclear

\subsection{Ablation Study}
\label{sec:ablation}

\begin{wraptable}[10]{r}{0.48\columnwidth}
\vspace{-18pt}
\centering
\caption{Ablation of descriptor components and temporal support on CIFAR-100 at P@5.}
\label{tab:cifar-descriptor-ablation}
\scriptsize
\setlength{\tabcolsep}{4pt}
\renewcommand{\arraystretch}{0.95}

\resizebox{\linewidth}{!}{
\begin{tabular}{lrr}
\toprule
Variant & Mid-level & Fine class \\
\midrule
Energy only                  & 13.73 &  2.65 \\
Energy + velocity            & 31.57 &  7.68 \\
$+$ Second-order reliability & 34.22 &  8.86 \\
$+$ Multi-seed stability     & 35.24 &  9.24 \\ \hline 
Full \method{}               & \best{50.59} & \best{21.68} \\
\bottomrule
\end{tabular}
}

\vspace{-8pt}
\end{wraptable}

We ablate the dynamic descriptor and temporal support of \method{} on CIFAR-100 while keeping the evaluation protocol and $K_\lambda$ fixed. 
% Adding velocity to energy yields the main gain, with second-order reliability and multi-seed stability providing further improvements. The full \method{} performs best, supporting the value of dynamic representation.
Energy captures where the factor-specific response is strong, while velocity complements it by describing how that response changes across adjacent denoising stages. Their
combination therefore preserves both response magnitude and temporal evolution,
which explains the large gain from adding velocity. Second-order reliability
and multi-seed stability further suppress transient or noise-sensitive responses,
yielding more consistent trajectory evidence. The full \method{} performs best,
supporting the benefit of jointly modeling response dynamics and their stability.

\section{Conclusion}

% We introduced \method{}, a factor-specific diffusion data attribution framework that constructs matched counterfactual responses and represents their evolution through dynamic features along the denoising trajectory. A training-bank-adaptive kernel then compares these dynamic signatures against training examples. Experiments on multiple public datasets show that tracing how factor-specific responses evolve throughout the generation process provides effective signals for attribution to training data.
We identify a key limitation of diffusion data attribution: a single scalar score can obscure both which semantic factor is being supported and how that influence evolves during generation. We introduced \method{}, which addresses this by isolating factor-specific counterfactual responses and tracing their evolution across the denoising trajectory. Its effectiveness comes from controlling shared variation through matched target--reference comparisons while preserving informative changes in response strength and direction. More broadly, our results suggest that training influence in generative models is better viewed as a dynamic process rather than a static association. Attribution should therefore ask not only \emph{which} examples matter, but also \emph{how} their influence unfolds during generation. Future work will explore automatically discovered factors and references, broader generative architectures, and stronger causal validation of trajectory-based attribution.

\newpage

\subsection*{AI use statement}

In this work, we used generative AI tools for proofreading the manuscript and improving its readability and clarity. All research ideas, methodological and theoretical developments, experiments, and method implementations were developed and conducted by the authors. We have reviewed all AI-assisted work. We take responsibility for the final content of this work, including text, claims or artifacts produced with the aid of generative AI.

\bibliography{iclr2027_conference}

\begin{thebibliography}{27}
\providecommand{\natexlab}[1]{#1}
\providecommand{\url}[1]{\texttt{#1}}
\expandafter\ifx\csname urlstyle\endcsname\relax
  \providecommand{\doi}[1]{doi: #1}\else
  \providecommand{\doi}{doi: \begingroup \urlstyle{rm}\Url}\fi

\bibitem[Bai et~al.(2025)Bai, Cai, Chen, Chen, Chen, Cheng, Deng, Ding, Gao, Ge, Ge, Guo, Huang, Huang, Huang, Hui, Jiang, Li, Li, Li, Li, Lin, Lin, Liu, Liu, Liu, Liu, Liu, Liu, Lu, Luo, Lv, Men, Meng, Ren, Ren, Song, Sun, Tang, Tu, Wan, Wang, Wang, Wang, Wang, Xie, Xu, Xu, Xu, Yang, Yang, Yang, Yang, Yu, Zhang, Zhang, Zhang, Zheng, Zhong, Zhou, Zhou, Zhou, Zhu, and Zhu]{bai2025qwen3vl}
Shuai Bai, Yuxuan Cai, Ruizhe Chen, Keqin Chen, Xionghui Chen, Zesen Cheng, Lianghao Deng, Wei Ding, Chang Gao, Chunjiang Ge, Wenbin Ge, Zhifang Guo, Qidong Huang, Jie Huang, Fei Huang, Binyuan Hui, Shutong Jiang, Zhaohai Li, Mingsheng Li, Mei Li, Kaixin Li, Zicheng Lin, Junyang Lin, Xuejing Liu, Jiawei Liu, Chenglong Liu, Yang Liu, Dayiheng Liu, Shixuan Liu, Dunjie Lu, Ruilin Luo, Chenxu Lv, Rui Men, Lingchen Meng, Xuancheng Ren, Xingzhang Ren, Sibo Song, Yuchong Sun, Jun Tang, Jianhong Tu, Jianqiang Wan, Peng Wang, Pengfei Wang, Qiuyue Wang, Yuxuan Wang, Tianbao Xie, Yiheng Xu, Haiyang Xu, Jin Xu, Zhibo Yang, Mingkun Yang, Jianxin Yang, An~Yang, Bowen Yu, Fei Zhang, Hang Zhang, Xi~Zhang, Bo~Zheng, Humen Zhong, Jingren Zhou, Fan Zhou, Jing Zhou, Yuanzhi Zhu, and Ke~Zhu.
\newblock Qwen3-vl technical report.
\newblock \emph{arXiv preprint arXiv:2511.21631}, 2025.

\bibitem[Carlini et~al.(2023)Carlini, Hayes, Nasr, Jagielski, Sehwag, Tram{\`e}r, Balle, Ippolito, and Wallace]{carlini2023extracting}
Nicholas Carlini, Jamie Hayes, Milad Nasr, Matthew Jagielski, Vikash Sehwag, Florian Tram{\`e}r, Borja Balle, Daphne Ippolito, and Eric Wallace.
\newblock Extracting training data from diffusion models.
\newblock In \emph{32nd USENIX Security Symposium}, pp.\  5253--5270. USENIX Association, 2023.

\bibitem[Dai \& Gifford(2023)Dai and Gifford]{dai2023training}
Zheng Dai and David~K. Gifford.
\newblock Training data attribution for diffusion models.
\newblock \emph{arXiv preprint arXiv:2306.02174}, 2023.

\bibitem[Georgiev et~al.(2023)Georgiev, Vendrow, Salman, Park, and Madry]{georgiev2023journey}
Kristian Georgiev, Joshua Vendrow, Hadi Salman, Sung~Min Park, and Aleksander Madry.
\newblock The journey, not the destination: How data guides diffusion models.
\newblock In \emph{ICML Workshop on Challenges in Deployable Generative AI}, 2023.

\bibitem[Ghorbani \& Zou(2019)Ghorbani and Zou]{ghorbani2019shapley}
Amirata Ghorbani and James Zou.
\newblock Data shapley: Equitable valuation of data for machine learning.
\newblock In \emph{Proceedings of the 36th International Conference on Machine Learning}, volume~97 of \emph{Proceedings of Machine Learning Research}, pp.\  2242--2251. PMLR, 2019.

\bibitem[Ho et~al.(2020)Ho, Jain, and Abbeel]{ho2020ddpm}
Jonathan Ho, Ajay Jain, and Pieter Abbeel.
\newblock Denoising diffusion probabilistic models.
\newblock In \emph{Advances in Neural Information Processing Systems}, volume~33, pp.\  6840--6851, 2020.

\bibitem[Hu et~al.(2022)Hu, Shen, Wallis, Allen-Zhu, Li, Wang, Wang, and Chen]{hu2022lora}
Edward~J. Hu, Yelong Shen, Phillip Wallis, Zeyuan Allen-Zhu, Yuanzhi Li, Shean Wang, Lu~Wang, and Weizhu Chen.
\newblock {LoRA}: Low-rank adaptation of large language models.
\newblock In \emph{International Conference on Learning Representations}, 2022.

\bibitem[Ilyas et~al.(2022)Ilyas, Park, Engstrom, Leclerc, and Madry]{ilyas2022datamodels}
Andrew Ilyas, Sung~Min Park, Logan Engstrom, Guillaume Leclerc, and Aleksander Madry.
\newblock Datamodels: Understanding predictions with data and data with predictions.
\newblock In \emph{Proceedings of the 39th International Conference on Machine Learning}, volume 162 of \emph{Proceedings of Machine Learning Research}, pp.\  9525--9587. PMLR, 2022.

\bibitem[Koh \& Liang(2017)Koh and Liang]{koh2017influence}
Pang~Wei Koh and Percy Liang.
\newblock Understanding black-box predictions via influence functions.
\newblock In \emph{Proceedings of the 34th International Conference on Machine Learning}, volume~70 of \emph{Proceedings of Machine Learning Research}, pp.\  1885--1894. PMLR, 2017.

\bibitem[Krizhevsky(2009)]{krizhevsky2009cifar}
Alex Krizhevsky.
\newblock Learning multiple layers of features from tiny images.
\newblock Technical report, University of Toronto, 2009.

\bibitem[Liao et~al.(2022)Liao, Li, Liu, and Keutzer]{liao2022artbench}
Peiyuan Liao, Xiuyu Li, Xihui Liu, and Kurt Keutzer.
\newblock The artbench dataset: Benchmarking generative models with artworks.
\newblock \emph{arXiv preprint arXiv:2206.11404}, 2022.

\bibitem[Lin et~al.(2025)Lin, Tao, Dong, and Xu]{lin2025das}
Jinxu Lin, Linwei Tao, Minjing Dong, and Chang Xu.
\newblock Diffusion attribution score: Evaluating training data influence in diffusion models.
\newblock In \emph{International Conference on Learning Representations}, 2025.

\bibitem[Mlodozeniec et~al.(2024)Mlodozeniec, Eschenhagen, Bae, Immer, Krueger, and Turner]{mlodozeniec2024influence}
Bruno Mlodozeniec, Runa Eschenhagen, Juhan Bae, Alexander Immer, David Krueger, and Richard~E. Turner.
\newblock Influence functions for scalable data attribution in diffusion models.
\newblock \emph{arXiv preprint arXiv:2410.13850}, 2024.

\bibitem[Nguyen et~al.(2025)Nguyen, Le~Nguyen, Lucey, and Hoai]{nguyen2025region}
Trong~Bang Nguyen, Phi Le~Nguyen, Simon Lucey, and Minh Hoai.
\newblock Region-level data attribution for text-to-image generative models.
\newblock In \emph{Proceedings of the IEEE/CVF International Conference on Computer Vision}, pp.\  18825--18833, 2025.

\bibitem[Nichol \& Dhariwal(2021)Nichol and Dhariwal]{nichol2021improved}
Alexander~Quinn Nichol and Prafulla Dhariwal.
\newblock Improved denoising diffusion probabilistic models.
\newblock In \emph{Proceedings of the 38th International Conference on Machine Learning}, volume 139 of \emph{Proceedings of Machine Learning Research}, pp.\  8162--8171. PMLR, 2021.

\bibitem[Park et~al.(2023)Park, Georgiev, Ilyas, Leclerc, and Madry]{park2023trak}
Sung~Min Park, Kristian Georgiev, Andrew Ilyas, Guillaume Leclerc, and Aleksander Madry.
\newblock {TRAK}: Attributing model behavior at scale.
\newblock In \emph{Proceedings of the 40th International Conference on Machine Learning}, volume 202 of \emph{Proceedings of Machine Learning Research}, pp.\  27074--27113. PMLR, 2023.

\bibitem[Park et~al.(2026)Park, Lai, Hayakawa, Takida, Murata, Liao, Choi, Cheuk, Koo, and Mitsufuji]{park2025concept}
Yonghyun Park, Chieh-Hsin Lai, Satoshi Hayakawa, Yuhta Takida, Naoki Murata, Wei-Hsiang Liao, Woosung Choi, Kin~Wai Cheuk, Junghyun Koo, and Yuki Mitsufuji.
\newblock {Concept-TRAK}: Understanding how diffusion models learn concepts through concept-level attribution.
\newblock In \emph{International Conference on Learning Representations}, 2026.
\newblock URL \url{https://arxiv.org/abs/2507.06547}.

\bibitem[Pruthi et~al.(2020)Pruthi, Liu, Kale, and Sundararajan]{pruthi2020tracin}
Garima Pruthi, Frederick Liu, Satyen Kale, and Mukund Sundararajan.
\newblock Estimating training data influence by tracing gradient descent.
\newblock In \emph{Advances in Neural Information Processing Systems}, volume~33, pp.\  19920--19930, 2020.

\bibitem[Rombach et~al.(2022)Rombach, Blattmann, Lorenz, Esser, and Ommer]{rombach2022high}
Robin Rombach, Andreas Blattmann, Dominik Lorenz, Patrick Esser, and Bjorn Ommer.
\newblock High-resolution image synthesis with latent diffusion models.
\newblock In \emph{Proceedings of the IEEE/CVF Conference on Computer Vision and Pattern Recognition}, pp.\  10684--10695, 2022.
\newblock \doi{10.1109/CVPR52688.2022.01042}.

\bibitem[Sohl-Dickstein et~al.(2015)Sohl-Dickstein, Weiss, Maheswaranathan, and Ganguli]{sohl2015nonequilibrium}
Jascha Sohl-Dickstein, Eric Weiss, Niru Maheswaranathan, and Surya Ganguli.
\newblock Deep unsupervised learning using nonequilibrium thermodynamics.
\newblock In \emph{Proceedings of the 32nd International Conference on Machine Learning}, volume~37 of \emph{Proceedings of Machine Learning Research}, pp.\  2256--2265. PMLR, 2015.

\bibitem[Somepalli et~al.(2023)Somepalli, Singla, Goldblum, Geiping, and Goldstein]{somepalli2023replication}
Gowthami Somepalli, Vasu Singla, Micah Goldblum, Jonas Geiping, and Tom Goldstein.
\newblock Diffusion art or digital forgery? investigating data replication in diffusion models.
\newblock In \emph{Proceedings of the IEEE/CVF Conference on Computer Vision and Pattern Recognition}, pp.\  6048--6058, 2023.

\bibitem[Song et~al.(2021{\natexlab{a}})Song, Meng, and Ermon]{song2021ddim}
Jiaming Song, Chenlin Meng, and Stefano Ermon.
\newblock Denoising diffusion implicit models.
\newblock In \emph{International Conference on Learning Representations}, 2021{\natexlab{a}}.

\bibitem[Song et~al.(2021{\natexlab{b}})Song, Sohl-Dickstein, Kingma, Kumar, Ermon, and Poole]{song2021score}
Yang Song, Jascha Sohl-Dickstein, Diederik~P. Kingma, Abhishek Kumar, Stefano Ermon, and Ben Poole.
\newblock Score-based generative modeling through stochastic differential equations.
\newblock In \emph{International Conference on Learning Representations}, 2021{\natexlab{b}}.

\bibitem[Tang et~al.(2023)Tang, Liu, Pandey, Jiang, Yang, Kumar, Stenetorp, Lin, and Ture]{tang2023daam}
Raphael Tang, Linqing Liu, Akshat Pandey, Zhiying Jiang, Gefei Yang, Karun Kumar, Pontus Stenetorp, Jimmy Lin, and Ferhan Ture.
\newblock What the {DAAM}: Interpreting stable diffusion using cross attention.
\newblock In \emph{Proceedings of the 61st Annual Meeting of the Association for Computational Linguistics}, pp.\  5644--5659. Association for Computational Linguistics, 2023.
\newblock \doi{10.18653/v1/2023.acl-long.310}.

\bibitem[Xie et~al.(2024)Xie, Li, Bai, and Hsieh]{xie2024diffusionretrac}
Tong Xie, Haoyu Li, Andrew Bai, and Cho-Jui Hsieh.
\newblock Data attribution for diffusion models: Timestep-induced bias in influence estimation.
\newblock \emph{arXiv preprint arXiv:2401.09031}, 2024.

\bibitem[Yeh et~al.(2018)Yeh, Kim, Yen, and Ravikumar]{yeh2018representer}
Chih-Kuan Yeh, Joon~Sik Kim, Ian E.~H. Yen, and Pradeep Ravikumar.
\newblock Representer point selection for explaining deep neural networks.
\newblock In \emph{Advances in Neural Information Processing Systems}, volume~31, 2018.

\bibitem[Zheng et~al.(2024)Zheng, Pang, Du, Jiang, and Lin]{zheng2024dtrak}
Xiaosen Zheng, Tianyu Pang, Chao Du, Jing Jiang, and Min Lin.
\newblock Intriguing properties of data attribution on diffusion models.
\newblock In \emph{International Conference on Learning Representations}, 2024.

\end{thebibliography}
\bibliographystyle{iclr2027_conference}

\appendix

\setcounter{figure}{0}
\renewcommand{\thefigure}{A\arabic{figure}}

\setcounter{table}{0}
\renewcommand{\thetable}{A\arabic{table}}

\setcounter{equation}{0}
\renewcommand{\theequation}{A\arabic{equation}}

\setcounter{theorem}{0}
\renewcommand{\thetheorem}{\arabic{theorem}}

\section*{Appendix}\label{appendix}

\section{Dynamic Attribution Signature}
\label{app:dynamic-descriptor}

This section provides the mathematical definitions of the dynamic features used
to construct the trajectory representation in Section~\ref{sec:dynamic-signature}.
These features characterize complementary aspects of the factor-specific
counterfactual response, including response strength, first-order directional
evolution, and trajectory reliability.

For convenience, we denote the counterfactual response at layer $\ell$,
timestep $t_j$, and analysis probe $s$ as: 
\begin{equation}
R_{s,\ell,t_j}(x)
=
\mathcal R_{c,s}^{\ell}(x,t_j)
\in
\mathbb R^{P_\ell\times C_\ell},
\end{equation}
where $P_\ell$ and $C_\ell$ denote the token and channel dimensions,
respectively.

\paragraph{Response Energy.}
The response energy measures the strength of the condition-induced activation.
For channel $k$, we compute the spatial RMS magnitude as: 
\begin{equation}
E_{s,\ell,t_j,k}(x)
=
\log
\left(
\sqrt{
\frac{1}{P_\ell}
\sum_{p=1}^{P_\ell}
R_{s,\ell,t_j,p,k}(x)^2
}
+\epsilon
\right).
\label{eq:app-energy}
\end{equation}
This quantity captures how strongly the target factor affects each channel at a given layer and diffusion state. Energy features are standardized using statistics estimated exclusively from the training bank.

\paragraph{First-Order Directional Dynamics.}
Let $u_j=\log\operatorname{SNR}(t_j)$. We define the first-order response velocity between adjacent recorded states as: 
\begin{equation}
V_{s,\ell,t_j}(x)
=
\frac{
R_{s,\ell,t_{j+1}}(x)-R_{s,\ell,t_j}(x)
}{
u_{j+1}-u_j
}.
\label{eq:app-first-order}
\end{equation}
Unlike response energy, which describes activation strength, $V_{s,\ell,t_j}$ characterizes how the factor-specific response evolves along the denoising trajectory. For each channel, we normalize the spatial velocity and retain its direction: 
\begin{equation}
d_{s,\ell,t_j,k}(x)
=
\frac{
V_{s,\ell,t_j,:,k}(x)
}{
\|V_{s,\ell,t_j,:,k}(x)\|_2+\epsilon
},
\label{eq:app-direction}
\end{equation}
where $V_{s,\ell,t_j,:,k}$ denotes the velocity vector of channel $k$
over all token positions.

\paragraph{Multi-Probe Directional Stability.}
Repeated probes are used to determine whether the observed dynamic direction is stable under different analysis noise realizations. Let
\begin{equation}
m^V_{\ell,t_j,k}(x)
=
\frac{1}{S}
\sum_{s=1}^{S}
d_{s,\ell,t_j,k}(x).
\end{equation}
We define the directional coherence as: 
\begin{equation}
\rho_{\ell,t_j,k}(x)
=
\operatorname{clip}_{[0,1]}
\left(
\|m^V_{\ell,t_j,k}(x)\|_2
\right),
\label{eq:app-direction-stability}
\end{equation}
and the corresponding mean direction as: 
\begin{equation}
D_{\ell,t_j,k}(x)
=
\frac{
m^V_{\ell,t_j,k}(x)
}{
\|m^V_{\ell,t_j,k}(x)\|_2+\epsilon
}.
\label{eq:app-mean-direction}
\end{equation}
A large $\rho_{\ell,t_j,k}$ indicates that the response evolves consistently across probes, whereas a small value indicates unstable or conflicting directions.

\paragraph{Energy Stability.}
We similarly measure whether the response-strength pattern is consistent across repeated probes. Let $\widetilde E_{s,\ell,t_j}(x)$ denote the standardized channel-wise energy vector. Its cross-probe agreement is
\begin{equation}
r^E_{\ell,t_j}(x)
=
\operatorname{clip}_{[0,1]}
\left[
\frac{2}{S(S-1)}
\sum_{s<s'}
\cos
\left(
\widetilde E_{s,\ell,t_j}(x),
\widetilde E_{s',\ell,t_j}(x)
\right)
\right].
\label{eq:app-energy-stability}
\end{equation}
The reliability-weighted energy representation is therefore:
\begin{equation}
\psi^E_{\ell,t_j}(x)
=
r^E_{\ell,t_j}(x)
\,
\overline E_{\ell,t_j}(x),
\label{eq:app-energy-block}
\end{equation}
where $\overline E_{\ell,t_j}$ denotes the energy vector averaged across probes.

\paragraph{Second-Order Reliability.}
Second-order dynamics are used only to assess the reliability of first-order evolution. We define:
\begin{equation}
A_{s,\ell,t_j,k}(x)
=
\frac{
2\left(
V_{s,\ell,t_{j+1},:,k}(x)
-
V_{s,\ell,t_j,:,k}(x)
\right)
}{
u_{j+2}-u_j
}.
\label{eq:app-second-order}
\end{equation}
Let $\overline A_{\ell,t_j,k}$ denote its mean across probes. The corresponding second-order reliability is
\begin{equation}
r^A_{\ell,t_j,k}(x)
=
\operatorname{clip}_{[0,1]}
\left[
\left(
1+
\frac{
\frac{1}{S}
\sum_s
\|A_{s,\ell,t_j,k}(x)-\overline A_{\ell,t_j,k}(x)\|_2^2
}{
\|\overline A_{\ell,t_j,k}(x)\|_2^2+\epsilon
}
\right)^{-1}
\right].
\label{eq:app-second-order-reliability}
\end{equation}
A high value indicates consistent local acceleration across probes, while a low value suppresses first-order directions whose local evolution is unstable.

Combining directional coherence and second-order reliability gives
\begin{equation}
\psi^V_{\ell,t_j,k}(x)
=
\sqrt{
\rho_{\ell,t_j,k}(x)
\widetilde r^A_{\ell,t_j,k}(x)
}
\,D_{\ell,t_j,k}(x),
\label{eq:app-velocity-block}
\end{equation}
where $\widetilde r^A_{\ell,t_j,k}$ denotes the second-order reliability aligned with velocity interval $t_j$.

\section{Theoretical Analysis}
\label{app:theory}

All results in this section concern the frozen dynamic representation used during attribution. Our analysis instead asks a representation-level question: once the dynamic descriptors and training-bank statistics have been fixed, what is the mathematical meaning of the bank-adaptive interaction used by \method{}? We show that this interaction is exactly a ridge-reconstruction coefficient and that it governs an exact leave-one-descriptor-out quantity.

For completeness, since $Z_b^\top Z_b+\lambda I\succ0$ for $\lambda>0$ and $w_b\ge0$, the kernel $K_\lambda$ in Eq.~\eqref{eq:rda-kernel} is positive semidefinite.

\subsection{Ridge Reconstruction Interpretation}

Consider a dense descriptor bank: 
\begin{equation}
Z
=
\begin{bmatrix}
z_1^\top\\
\vdots\\
z_N^\top
\end{bmatrix}
\in\mathbb R^{N\times d},
\end{equation}
and define \(A=Z^\top Z+\lambda I_d\) with \(\lambda>0\).

We first show that the bank-adaptive interaction has a direct reconstruction
interpretation.

\begin{proposition}[Ridge reconstruction coefficient]
\label{prop:ridge-coefficient}

Consider the regularized reconstruction problem: 
\begin{equation}
a^\star
=
\arg\min_{a\in\mathbb R^N}
\|z_q-Z^\top a\|_2^2
+
\lambda\|a\|_2^2.
\label{eq:app-ridge-reconstruction}
\end{equation}
Then: 
\begin{equation}
a^\star
=
(ZZ^\top+\lambda I_N)^{-1}Zz_q
=
Z(Z^\top Z+\lambda I_d)^{-1}z_q.
\label{eq:app-ridge-solution}
\end{equation}
Consequently, the coefficient associated with training descriptor $z_i$ is: 
\begin{equation}
\boxed{
a_i^\star
=
z_q^\top A^{-1}z_i
}.
\label{eq:app-ridge-coefficient}
\end{equation}
\end{proposition}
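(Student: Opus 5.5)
The plan is to treat \eqref{eq:app-ridge-reconstruction} as a strictly convex quadratic in $a$, solve its first-order condition, and then move the inverse from the $N$-dimensional sample space to the $d$-dimensional feature space with a push-through identity.

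\emph{First-order condition.} Write
\[
J(a)=\|z_q-Z^\top a\|_2^2+\lambda\|a\|_2^2 .
\]
Its gradient is $\nabla J(a)=-2Z(z_q-Z^\top a)+2\lambda a$ and its Hessian is $2(ZZ^\top+\lambda I_N)$. Since $ZZ^\top\succeq0$ and $\lambda>0$, the Hessian is positive definite. Hence $J$ is strictly convex with a unique minimizer, characterized by
\[
(ZZ^\top+\lambda I_N)a^\star=Zz_q .
\]
This gives the first expression in \eqref{eq:app-ridge-solution}.

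\emph{Push-through identity.} Next I show
\[
(ZZ^\top+\lambda I_N)^{-1}Z=Z(Z^\top Z+\lambda I_d)^{-1}.
\]
Start from the equality $Z(Z^\top Z+\lambda I_d)=ZZ^\top Z+\lambda Z=(ZZ^\top+\lambda I_N)Z$. Both matrices $ZZ^\top+\lambda I_N$ and $A=Z^\top Z+\lambda I_d$ are positive definite for $\lambda>0$, so both are invertible. Multiplying the equality on the left by $(ZZ^\top+\lambda I_N)^{-1}$ and on the right by $A^{-1}$ yields the identity. Applying it to $z_q$ gives the second expression in \eqref{eq:app-ridge-solution}, namely $a^\star=ZA^{-1}z_q$.

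\emph{Reading off the coefficient.} The $i$-th row of $Z$ is $z_i^\top$, so $a_i^\star=z_i^\top A^{-1}z_q$. Because $A$ is symmetric, so is $A^{-1}$, and therefore this scalar equals its transpose $z_q^\top A^{-1}z_i$, which is \eqref{eq:app-ridge-coefficient}. With $Z=Z_b$, $z_q=h_b(q)$ and $z_i=h_b(x_i)$, this recovers \eqref{eq:block-ridge-coefficient}; the inverse $A^{-1}$ becomes $G_b$.

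\emph{Main obstacle.} No step is genuinely hard. The only point needing care is the push-through identity, specifically justifying invertibility of both regularized Gram matrices, which is where $\lambda>0$ is used. One must also keep the dimensions straight: $N\times N$ versus $d\times d$, and $Z^\top a\in\mathbb R^d$ matching $z_q$.
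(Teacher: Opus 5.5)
Your proposal is correct and follows the paper's proof essentially step for step: normal equations, the push-through identity $(ZZ^\top+\lambda I_N)Z=Z(Z^\top Z+\lambda I_d)$, then reading off the $i$-th entry of $ZA^{-1}z_q$. Your extra justifications fill in details the paper leaves implicit: strict convexity for uniqueness of the minimizer, invertibility of both regularized Gram matrices (where $\lambda>0$ enters), and symmetry of $A^{-1}$ to rewrite $z_i^\top A^{-1}z_q$ as $z_q^\top A^{-1}z_i$.
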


\begin{proof}
The normal equations of Eq.~\eqref{eq:app-ridge-reconstruction} give
\((ZZ^\top+\lambda I_N)a^\star=Zz_q\).
Moreover,
\((ZZ^\top+\lambda I_N)Z=Z(Z^\top Z+\lambda I_d)\),
and therefore
\((ZZ^\top+\lambda I_N)^{-1}Z
=Z(Z^\top Z+\lambda I_d)^{-1}\).
Substitution yields Eq.~\eqref{eq:app-ridge-solution}, whose $i$-th entry gives Eq.~\eqref{eq:app-ridge-coefficient}.
\end{proof}

Thus, the interaction used by \method{} is not merely a covariance-adjusted similarity. In the dense representation, it is exactly the coefficient assigned to a training descriptor when reconstructing the query under ridge regularization.

\subsection{Exact Representation-Level Counterfactual Removal}

The reconstruction interpretation above also yields an exact representation-level counterfactual quantity. We ask how much the optimal ridge reconstruction deteriorates when one training descriptor is removed while the descriptor map and all fitted transformations remain fixed.

Define the optimal reconstruction objective as:
\begin{equation}
E_Z(z_q)
=
\min_a
\|z_q-Z^\top a\|_2^2
+
\lambda\|a\|_2^2.
\label{eq:app-ridge-objective}
\end{equation}

\begin{theorem}[Exact leave-one-descriptor-out effect]
\label{thm:loo}

Let \(A=Z^\top Z+\lambda I\). Then:
\begin{equation}
E_Z(z_q)
=
\lambda z_q^\top A^{-1}z_q.
\label{eq:app-ridge-optimum}
\end{equation}

For training descriptor $z_i$, define
\(h_i=z_i^\top A^{-1}z_i\) and
\(k_i(z_q)=z_q^\top A^{-1}z_i\).
Then \(0\le h_i<1\), and removing $z_i$ increases the optimal reconstruction objective by: 
\begin{equation}
\boxed{
E_{Z_{-i}}(z_q)-E_Z(z_q)
=
\lambda
\frac{
k_i(z_q)^2
}{
1-h_i
}
}
\ge0.
\label{eq:app-loo}
\end{equation}
\end{theorem}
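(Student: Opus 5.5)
We will treat the optimal value first, then the removal, reusing Proposition~\ref{prop:ridge-coefficient} throughout. For the optimal value, we plug $a^\star=Z A^{-1}z_q$ into the objective. The residual is $z_q-Z^\top Z A^{-1}z_q=(A-Z^\top Z)A^{-1}z_q=\lambda A^{-1}z_q$, so the residual term is $\lambda^2 z_q^\top A^{-2}z_q$. The penalty is $\lambda\|ZA^{-1}z_q\|^2=\lambda z_q^\top A^{-1}Z^\top Z A^{-1}z_q$. Adding them and writing $\lambda I+Z^\top Z=A$ gives $\lambda z_q^\top A^{-1}AA^{-1}z_q=\lambda z_q^\top A^{-1}z_q$, which is Eq.~\eqref{eq:app-ridge-optimum}. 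The same identity holds verbatim for any bank, in particular for $Z_{-i}$ with $A_{-i}=Z_{-i}^\top Z_{-i}+\lambda I$. Since $\lambda>0$, this matrix is again positive definite.

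Next, removing row $z_i$ gives the rank-one downdate $A_{-i}=A-z_iz_i^\top$.

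\textbf{Bounding $h_i$.} Since $A\succ0$ we have $h_i\ge0$. For the strict bound, we use $A_{-i}\succ0$ and the matrix determinant lemma: $\det A_{-i}=\det A\,(1-z_i^\top A^{-1}z_i)$. Positivity of both determinants then forces $1-h_i>0$.

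\textbf{The removal identity.} With $1-h_i\neq0$, Sherman--Morrison applies:
\[
A_{-i}^{-1}=A^{-1}+\frac{A^{-1}z_iz_i^\top A^{-1}}{1-h_i}.
\]
Sandwiching with $z_q$ and multiplying by $\lambda$ gives
\[
E_{Z_{-i}}(z_q)-E_Z(z_q)=\lambda\frac{(z_q^\top A^{-1}z_i)^2}{1-h_i}=\lambda\frac{k_i(z_q)^2}{1-h_i},
\]
where we use the symmetry of $A^{-1}$. Nonnegativity follows immediately from $\lambda>0$, $k_i^2\ge0$, and $1-h_i>0$.

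The only step needing care is the strict inequality $h_i<1$, since it is what legitimizes Sherman--Morrison. An alternative route is to note that $h_i=z_i^\top(\lambda I+Z_{-i}^\top Z_{-i}+z_iz_i^\top)^{-1}z_i$. Writing $B=\lambda I+Z_{-i}^\top Z_{-i}\succ0$, this equals $\frac{c}{1+c}<1$ with $c=z_i^\top B^{-1}z_i$, by Sherman--Morrison applied to the always-valid rank-one update of $B$. We would present this version because it avoids determinants and makes the bound transparent. It also recovers the formula by inverting the relation, since $1-h_i=1/(1+c)$.
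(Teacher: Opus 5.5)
Your proof is correct and follows the paper's route: write $A = A_{-i} + z_i z_i^\top$, apply Sherman--Morrison to get $h_i = c/(1+c) < 1$ with $c = z_i^\top A_{-i}^{-1} z_i$ together with the downdate formula for $A_{-i}^{-1}$, and sandwich $z_q$ into the closed form $\lambda z_q^\top A^{-1} z_q$. What you add is a verification of that closed form: substituting $a^\star = Z A^{-1} z_q$ gives residual $\lambda A^{-1} z_q$, a step the paper's proof uses without deriving. Your matrix-determinant-lemma argument for $h_i<1$ is also valid, though it is not needed since you chose to present the paper's version.
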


\begin{proof}
Let
\(C=Z_{-i}^\top Z_{-i}+\lambda I\succ0\),
so that \(A=C+z_iz_i^\top\).

Writing
\(t=z_i^\top C^{-1}z_i\ge0\),
the Sherman--Morrison identity gives: 
\begin{equation}
A^{-1}
=
C^{-1}
-
\frac{
C^{-1}z_iz_i^\top C^{-1}
}{
1+t
}.
\end{equation}
Hence \(h_i=t/(1+t)<1\).

Equivalently, the rank-one downdate is: 
\begin{equation}
C^{-1}
=
A^{-1}
+
\frac{
A^{-1}z_iz_i^\top A^{-1}
}{
1-h_i
}.
\end{equation}

Using Eq.~\eqref{eq:app-ridge-optimum}: 
\begin{align}
E_{Z_{-i}}(z_q)-E_Z(z_q)
&=
\lambda
z_q^\top
(C^{-1}-A^{-1})
z_q
\\
&=
\lambda
\frac{
\left(
z_q^\top A^{-1}z_i
\right)^2
}{
1-h_i
},
\end{align}
which proves Eq.~\eqref{eq:app-loo}.
\end{proof}

The theorem identifies
\(k_i(z_q)=z_q^\top A^{-1}z_i\)
as the primitive cross-interaction governing exact descriptor removal. The leave-one-descriptor-out effect is unsigned because it depends on $k_i(z_q)^2$ together with the leverage correction $(1-h_i)^{-1}$.

For factor-specific attribution, however, the sign of the interaction is informative: positive and negative values distinguish aligned from opposing dynamic responses in the bank-adaptive geometry. We therefore use the signed interaction for ranking and reserve the unsigned removal magnitude for counterfactual interpretation.The theorem therefore provides a counterfactual interpretation of the interaction underlying the attribution score, while the score itself remains distinct from the leave-one-descriptor-out quantity.

Importantly, this counterfactual is defined entirely in the frozen descriptor space. It does not imply that descriptor removal is equivalent to deleting the corresponding training example and retraining the diffusion model.

\subsection{Blockwise Extension}

The implemented \method{} geometry uses multiple local descriptor blocks rather than a single dense concatenated representation. We therefore define the corresponding block-separable reconstruction objective: 
\begin{equation}
E_{\mathrm{blk}}(q)
=
\sum_b
w_b
\min_{a_b\in\mathbb R^N}
\left[
\|h_b(q)-Z_b^\top a_b\|_2^2
+
\lambda\|a_b\|_2^2
\right].
\label{eq:app-block-reconstruction}
\end{equation}

For each block, recall
\(G_b=(Z_b^\top Z_b+\lambda I)^{-1}\),
and define
\(h_{ib}=h_b(x_i)^\top G_bh_b(x_i)\) and
\(k_{ib}(q)=h_b(q)^\top G_bh_b(x_i)\).

Applying Theorem~\ref{thm:loo} independently to each block gives:
\begin{equation}
\boxed{
E_{\mathrm{blk},-i}(q)
-
E_{\mathrm{blk}}(q)
=
\lambda
\sum_b
w_b
\frac{
k_{ib}(q)^2
}{
1-h_{ib}
}
}
\ge0.
\label{eq:app-block-removal}
\end{equation}

The blockwise removal quantity is not obtained by squaring the final score $K_\lambda(q,x_i)$, since each block receives its own leverage correction before aggregation. Nevertheless, both quantities are constructed from the same signed bank-adaptive interactions $k_{ib}(q)$. The theoretical connection is therefore at the level of these local cross-interactions.

As in the dense case, this result concerns only the frozen representation space and should not be interpreted as model-retraining influence.

\subsection{Stability of the Training-Bank Geometry}

Finally, we examine the sensitivity of the fitted inverse geometry while holding query and candidate descriptors fixed. This result concerns perturbations of the empirical second-order structure of the training bank; it does not characterize perturbations of the diffusion model or of the descriptor extraction process.

\begin{theorem}[Training-bank geometry stability]
\label{thm:stability}

Let \(A=C+\lambda I\) and
\(\widetilde A=C+E+\lambda I\), where
\(C\succeq0\), \(E=E^\top\), and
\(\delta=\|E\|_2<\lambda\).
Then, for fixed descriptors $z,z'$:
\begin{equation}
\left|
z^\top A^{-1}z'
-
z^\top\widetilde A^{-1}z'
\right|
\le
\|z\|_2
\|z'\|_2
\frac{
\delta
}{
\lambda(\lambda-\delta)
}.
\label{eq:app-stability}
\end{equation}
\end{theorem}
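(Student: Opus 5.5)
The plan is to reduce the bilinear difference to an operator-norm bound on $A^{-1}-\widetilde A^{-1}$ and then control that operator through the second resolvent identity. First, both matrices are invertible. Since $C\succeq0$, $A=C+\lambda I\succeq\lambda I$, so $A$ is symmetric positive definite with $\|A^{-1}\|_2\le 1/\lambda$. For $\widetilde A$, Weyl's inequality gives $\lambda_{\min}(\widetilde A)\ge\lambda_{\min}(C)+\lambda-\|E\|_2\ge\lambda-\delta>0$, using $\delta<\lambda$. Hence $\widetilde A$ is symmetric positive definite with $\|\widetilde A^{-1}\|_2\le 1/(\lambda-\delta)$.

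Next, apply the resolvent identity
\begin{equation*}
A^{-1}-\widetilde A^{-1}=A^{-1}(\widetilde A-A)\widetilde A^{-1}=A^{-1}E\,\widetilde A^{-1}.
\end{equation*}
This is checked directly by multiplying on the left by $A$ and on the right by $\widetilde A$. Submultiplicativity then gives
\begin{equation*}
\|A^{-1}-\widetilde A^{-1}\|_2\le\|A^{-1}\|_2\,\|E\|_2\,\|\widetilde A^{-1}\|_2\le\frac{\delta}{\lambda(\lambda-\delta)}.
\end{equation*}

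Finally, Cauchy--Schwarz yields $|z^\top Mz'|\le\|z\|_2\|M\|_2\|z'\|_2$ with $M=A^{-1}-\widetilde A^{-1}$, which is exactly the claimed bound.

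The only step needing care is the lower bound on the spectrum of $\widetilde A$. We have to use $C\succeq0$ together with the symmetry of $E$ to invoke Weyl's inequality, equivalently $x^\top\widetilde Ax\ge(\lambda-\delta)\|x\|^2$ for all $x$, so that $\widetilde A$ is symmetric positive definite. Then its inverse norm equals the reciprocal of its smallest eigenvalue. Everything else is routine.
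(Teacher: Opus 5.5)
Your proposal is correct and matches the paper's proof essentially step for step: $\|A^{-1}\|_2\le 1/\lambda$ follows from $C\succeq0$, and Weyl's inequality gives $\|\widetilde A^{-1}\|_2\le 1/(\lambda-\delta)$. The resolvent identity $A^{-1}-\widetilde A^{-1}=A^{-1}E\widetilde A^{-1}$ with submultiplicativity, followed by the bound $|z^\top M z'|\le\|z\|_2\|M\|_2\|z'\|_2$, completes the argument, and your explicit note that $\delta<\lambda$ makes $\widetilde A$ symmetric positive definite (hence invertible) is a detail the paper leaves implicit.
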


\begin{proof}
Since $C\succeq0$,
\(\|A^{-1}\|_2\le 1/\lambda\).
By Weyl's inequality,
\(\lambda_{\min}(\widetilde A)\ge\lambda-\delta\),
and therefore
\(\|\widetilde A^{-1}\|_2\le1/(\lambda-\delta)\).

Using the resolvent identity
\(\widetilde A^{-1}-A^{-1}
=-A^{-1}E\widetilde A^{-1}\),
we obtain: 
\[
\|
\widetilde A^{-1}-A^{-1}
\|_2
\le
\frac{
\delta
}{
\lambda(\lambda-\delta)
}.
\]
Finally,
\(|z^\top Bz'|
\le
\|z\|_2\|B\|_2\|z'\|_2\)
gives Eq.~\eqref{eq:app-stability}.
\end{proof}

Thus, within the frozen-descriptor setting, the ridge parameter controls the sensitivity of the bank-adaptive interaction to perturbations of the estimated second-order training-bank structure.

\section{Computational Complexity}
\label{app:complexity}

Let $D=\sum_b d_b$ denote the total descriptor dimension. The blockwise formulation requires
$O(N\sum_b d_b^2+\sum_b d_b^3)$ preprocessing time and
$O(\sum_b d_b^2)$ factor memory, compared with
$O(ND^2+D^3)$ time and $O(D^2)$ memory for a dense geometry over the concatenated descriptor. Once the descriptors and block factorizations are cached, scoring $Q$ queries against $N$ training examples costs $O(QND)$.

\section{Limitations and Future Work}
\label{sec:limitations}

A central limitation of \method{} is that its attribution signal is defined in the space of internal trajectory representations rather than directly through training-induced parameter updates. Although our reconstruction analysis provides an exact counterfactual interpretation in the frozen descriptor space, descriptor removal is not equivalent to removing a training example and retraining the diffusion model. A promising direction is therefore to connect multi-order trajectory dynamics with parameter gradients or local training perturbations, providing a tighter link between dynamic representations and model-level counterfactual influence.

Our ArtBench setting introduces an additional limitation. Stable Diffusion v1.4 is adapted with LoRA  \citep{hu2022lora} on a relatively small style-oriented dataset, and object-level attribution is generally weaker than style attribution. This may reflect both the strong object semantics already encoded during pretraining and the limited frequency of individual objects in the fine-tuning data. Future work could better separate attribution to pretrained knowledge from attribution introduced during downstream adaptation.

\section{Implementation and Evaluation Details}
\label{app:protocol}

\subsection{Why not LDS?}
Linear Datamodeling Score (LDS) measures whether additive attribution scores predict changes in a scalar utility across models trained on random data subsets \citep{park2023trak}. This is informative when the target is an image-level utility, but our evaluation asks which training examples support a specified factor in a generated image. As also noted by Concept-TRAK \citep{park2025concept}, the standard subset construction can obscure this signal when the factor is common: a reduced training set may still contain enough positive examples to preserve the concept, so utility changes remain small until the available support crosses a threshold. Such behavior is poorly summarized by a linear per-example response and can understate the relevance of concept-bearing samples.

LDS additionally requires training many subset models, which is particularly costly for text-to-image diffusion. We instead use retrieval precision with controlled factor labels or a frozen presence judge, directly testing whether the highest-ranked candidates contain the requested factor under shared query masks and candidate order. This is a task-specific choice: LDS remains appropriate for scalar-utility questions, while model-level subset retraining would provide a useful complementary validation. Our exact removal result in Theorem~\ref{thm:loo} concerns the frozen descriptor surrogate and is not presented as a replacement for such retraining.

\paragraph{Synthetic.}
Synthetic contains 11,024 $32\!\times\!32$ RGB images combining two shapes (circle and triangle) and two colors (red and blue), split into 10,000/512/512 training/validation/test images. Training and validation include blue triangles, blue circles, and red circles, with red triangles held out as compositional OOD. The temporal-support study uses an additional 500-query selection split and a disjoint 2,000-query main split, balanced between red triangles and blue circles and filtered by frozen shape and color classifiers. It evaluates eight 100-timestep intervals on the coarse $900\!\rightarrow\!100$ grid, all seven contiguous two-timestep windows, and all ten single-layer candidates.

\paragraph{CIFAR-100.}
We retain 59 fine classes (45 animal and 14 plant) and their original superclasses from $32\!\times\!32$ CIFAR-100. A fixed shuffle divides the 29,500 eligible training-split images into 26,500 attribution candidates and 3,000 validation images. The main bank contains 370 generated queries; an independently generated 171-query bank selects baseline regularization. A frozen multi-head Wide-ResNet-50-2 retains a query only when its animal/plant prediction matches the requested condition and its kingdom, superclass, and fine-class confidences all exceed $0.8$. The predicted superclass and fine class provide the hierarchical evaluation labels, and all methods share the accepted images and order.

\paragraph{ArtBench.}
ArtBench2 contains ukiyo-e and post-impressionism; ArtBench5 adds romanticism, Renaissance, and Baroque. After center-cropping to $256\!\times\!256$, the fixed subsets contain 6,000/13,500 real images: 5,000/12,500 for training and 1,000 for validation. The full-data LoRA generates 1,000 pure-style queries (500/200 per style), of which a frozen Qwen3-VL-8B-Instruct judge \citep{bai2025qwen3vl}  accepts 956/981. The object--style protocol generates 32/80 cat-or-dog queries (eight seeds per object--style pair), retaining 11/30 that contain both requested factors. Official metadata supplies style relevance; Qwen supplies query style/object validity and training-image object relevance. \method{} uses the bank-adaptive kernel $K_\lambda$ with $\lambda=0.01N$ on both subsets. For DAS, $\lambda$ is selected by mean generated-query LDS on the first 64 official 50\% subsets from 27 candidates spanning $10^{-2}$ to $5\!\times\!10^6$; neither P@$K$ nor composition labels enter this selection.

\section{Alternative Score Designs}
\label{app:score-designs}

\method{} ranks training examples with the signed bank-adaptive kernel
$K_\lambda$. The implementation also exposes two diagnostic alternatives using
the same fixed standardized block descriptors, with
$h(x)=\bigoplus_b\sqrt{w_b}h_b(x)$:
\begin{align}
K_\lambda(x,x')
&=
\sum_b w_b h_b(x)^\top G_b h_b(x'),
&&\text{\method{} bank-adaptive kernel},\\
C(x,x')
&=
\frac{h(x)^\top h(x')}
{\|h(x)\|_2\|h(x')\|_2},
&&\text{ordinary-cosine variant},\\
S_\lambda(x,x')
&=
\frac{K_\lambda(x,x')}
{\sqrt{K_\lambda(x,x)K_\lambda(x',x')}},
&&\text{normalized ridge variant}.
\end{align}
Cosine ignores the covariance structure of the training bank. $K_\lambda$ retains the signed ridge-whitened magnitude--direction interaction and, in the dense single-block case, corresponds directly to the interaction term arising from the reconstruction and descriptor-removal analysis. $S_\lambda$ uses the same bank-adaptive geometry but normalizes the ridge-whitened norms. We use $K_\lambda$ as the attribution score of \method{} throughout the main paper, while reporting $S_\lambda$ and ordinary cosine as exploratory score variants. Table~\ref{tab:label-flip-score-variants} compares the three scores using the identical full late-support descriptor, query bank, and candidate order; only the scoring function changes.

\begin{table*}[h]
\centering
\caption{
Exploratory score-design comparison for CIFAR-100 label-flip recovery. Values are P@$1/5/10/50$. All rows use the identical full late-support descriptor, query bank, and candidate order; only the scoring function changes.}
\label{tab:label-flip-score-variants}
\small
\resizebox{\textwidth}{!}{%
\begin{tabular}{lccc}
\toprule
Score
& Macro
& Flipped animal $\rightarrow$ plant
& Genuine plant \\
\midrule
\method{} ($K_\lambda$)
& 67.57 / 66.86 / 66.51 / 64.48
& 37.84 / 38.27 / 37.51 / 34.42
& 97.30 / 95.46 / 95.51 / 94.53 \\
Normalized variant ($S_\lambda$)
& 70.54 / 68.27 / 67.35 / 65.32
& 44.32 / 40.86 / 39.78 / 36.08
& 96.76 / 95.68 / 94.92 / 94.56 \\
Ordinary cosine
& 74.59 / 73.41 / 72.19 / 70.58
& 53.51 / 52.54 / 50.70 / 48.04
& 95.68 / 94.27 / 93.68 / 93.11 \\
\bottomrule
\end{tabular}}
\end{table*}

\section{Layer--Timestep Diagnostics}
\label{app:k-diagnostic}

The main experiments show that \method{} benefits from modeling factor-specific
response trajectories. Here, we further examine where this attribution evidence
arises within the network and how it evolves over denoising time. After the
factor-specific response is obtained at each denoising stage, \method{} preserves
these responses in temporal order and characterizes their evolution through
direction and energy. Direction captures signed agreement in response evolution,
whereas energy reflects response magnitude. We first analyze these quantities in
the selected low-noise region and then extend the analysis over the full
from (t=900) to (t=100) trajectory.

\begin{figure*}[h]
  \centering
  \includegraphics[width=\textwidth]
  {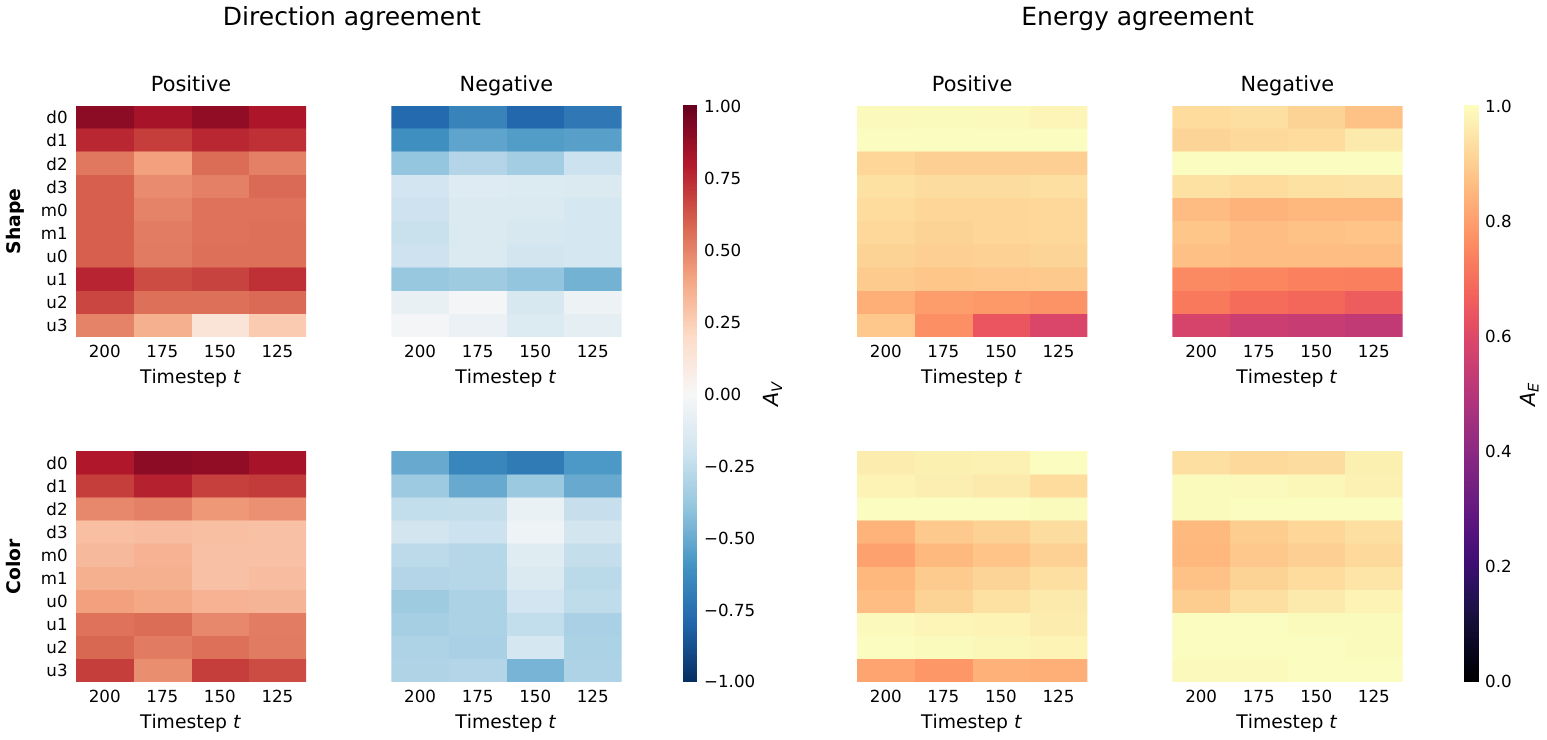}
  \caption{
  Fine-grained direction and energy patterns in the selected low-noise region.
  Representative positive and negative pairs are selected according to their
  separation under $K_\lambda$.
  }
  \label{fig:synthetic-k-pair-geometry}
\end{figure*}

Figure~\ref{fig:synthetic-k-pair-geometry} provides a local view of the dynamic
signals used by \method{}. Positive examples show consistently stronger
directional agreement with the query across layers and timesteps, whereas
negative examples exhibit weaker or opposing responses. Energy captures a
different aspect of the same trajectories by describing how strongly the
factor-specific response is expressed. These complementary patterns explain why
the dynamic descriptor benefits from retaining both the evolution direction and
response magnitude rather than reducing each example to a single static feature.

\begin{figure*}[h]
  \centering
  \includegraphics[width=\textwidth]
  {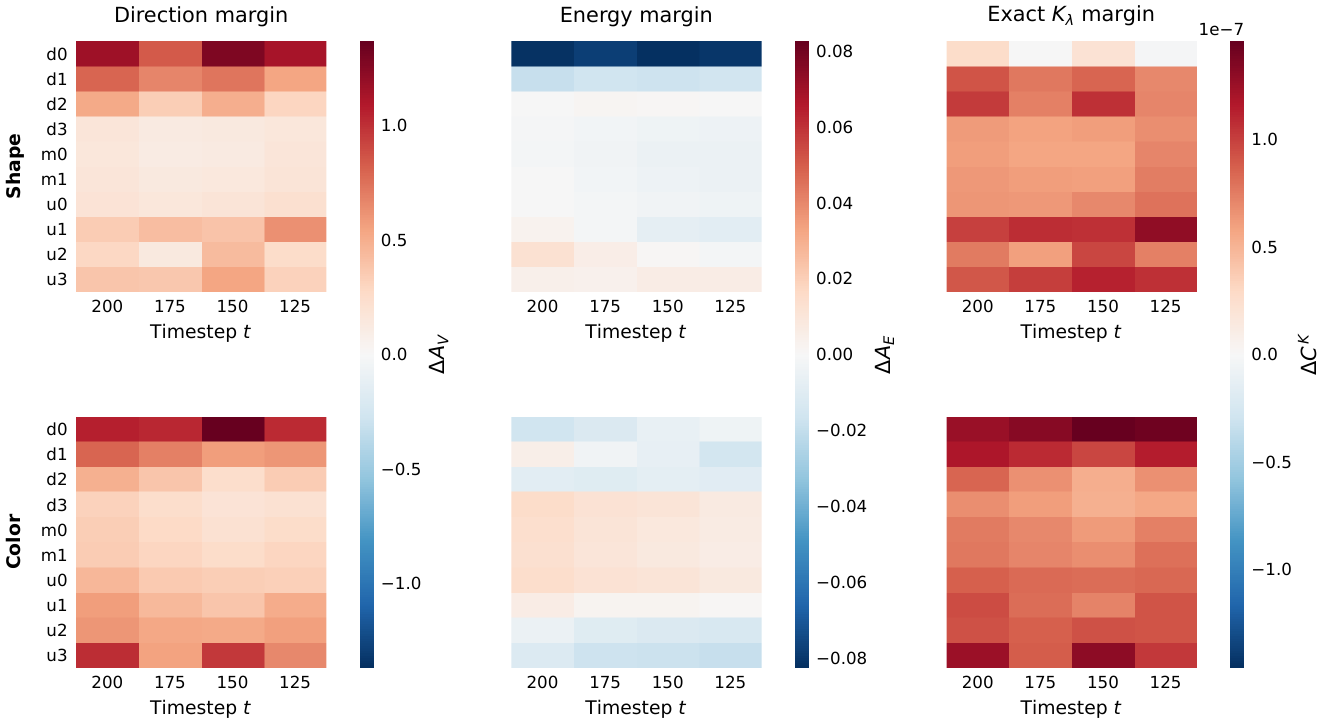}
  \caption{
  Matched-minus-mismatched direction, energy, and $K_\lambda$ cell margins
  in the fine-grained $200\!\rightarrow\!100$ analysis.
  }
  \label{fig:synthetic-k-alignment}
\end{figure*}

Figure~\ref{fig:synthetic-k-alignment} examines how these local dynamic
components relate to the final bank-adaptive interaction. For color, the
direction margin closely follows the $K_\lambda$ margin across the
layer--timestep grid, while energy exhibits a distinct pattern. Shape shows a
weaker but still structured correspondence. This indicates that $K_\lambda$
does not rely on response magnitude alone: directional evolution provides a
strong discriminative signal, while energy contributes complementary information
about where the response is expressed.

We next extend the analysis to the full from (t=900) to (t=100) trajectory.
Because the same factor-specific response is extracted at successive denoising
stages, these local responses can be organized into an ordered trajectory rather
than treated as independent timestep features. We aggregate layer--timestep
contributions into non-overlapping 100-timestep intervals to examine where the
resulting attribution support is concentrated across network depth and time.
A held-out selection split is used to identify representative layers and
temporal windows, while a disjoint main split is used for the final comparison.

\begin{table*}[h]
\centering
\caption{
Full-range layer--time analysis under $K_\lambda$. Correlations summarize the
correspondence of direction and energy with $K_\lambda$, while the final columns
report the selected layer and temporal window together with their separation
from the strongest alternative.
}
\label{tab:synthetic-full-grid}
\small
\resizebox{\textwidth}{!}{%
\begin{tabular}{lrrcc}
\toprule
Target
& Direction--$K_\lambda$ $\rho$
& Energy--$K_\lambda$ $\rho$
& Selected layer: gap [95\% CI] ($10^{-8}$)
& Selected window: gap [95\% CI] ($10^{-6}$) \\
\midrule
Shape
& $0.728$
& $-0.109$
& d1: $2.01\ [1.36,2.65]$
& $300\!\rightarrow\!100$: $0.871\ [0.852,0.890]$ \\
Color
& $0.972$
& $0.025$
& d0: $9.80\ [9.69,9.91]$
& $300\!\rightarrow\!100$: $1.097\ [1.078,1.117]$ \\
\bottomrule
\end{tabular}}
\end{table*}

Table~\ref{tab:synthetic-full-grid} confirms that directional evolution remains
strongly associated with $K_\lambda$ support over the full trajectory,
especially for color, whereas energy shows little monotone correspondence.
The strongest layer also depends on the target factor, with d1 selected for
shape and d0 for color, showing that factor-specific evidence is not localized
to a common network depth. In contrast, both factors concentrate their strongest
temporal support in the from (t=300) to (t=100)region. Together, these results
show that \method{} benefits from tracing the same factor-specific response
across denoising stages and aggregating the informative portions of its
layer--time trajectory.

\section{ArtBench Details and Visualizations}
\label{app:artbench}

ArtBench provides style labels but no ground-truth object annotations. We therefore freeze a
vision--language judge to determine object presence once and apply the same
valid-query mask to every method, preventing evaluation differences from being
introduced by query selection. Figure~\ref{fig:artbench-style-samples} shows
representative samples from the fixed ArtBench5 query bank, while
Figure~\ref{fig:artbench-composition-grid} visualizes the complete cat--style
composition bank before validity filtering and documents the queries entering
the composition evaluation. Figure~\ref{fig:artbench-retrieval} further provides
representative style-retrieval results on ArtBench2 and ArtBench5, illustrating
the training examples returned for generated queries under the corresponding
target styles.

\begin{figure*}[h]
  \centering
  \includegraphics[width=.98\textwidth]{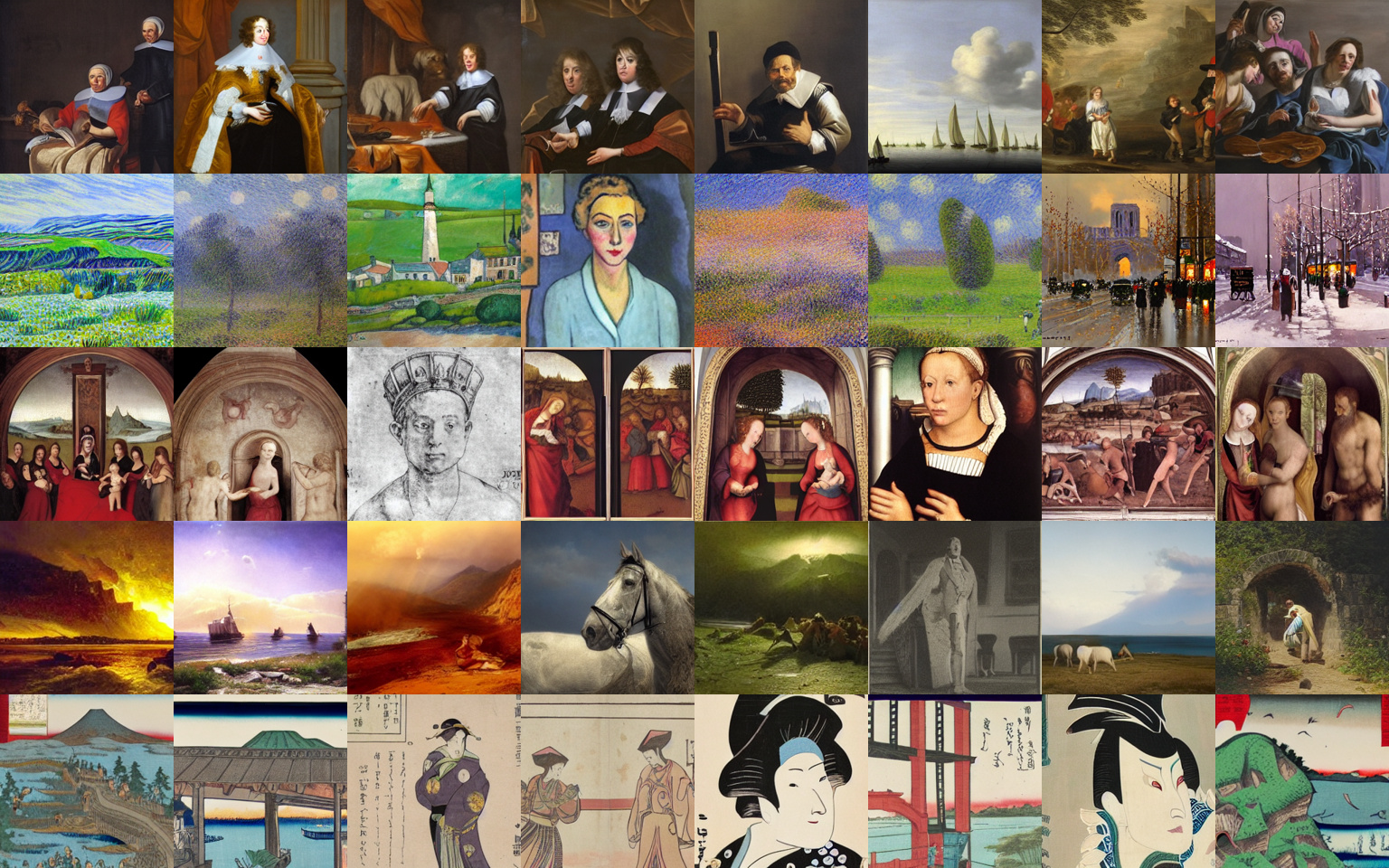}
  \caption{
  Examples from the fixed generated-query bank for ArtBench5.
  The ArtBench2 setting uses a subset of the styles included in ArtBench5,
  and is therefore covered by the examples shown here.
  Query validity is judged once and the same mask is shared by every method.
  }
  \label{fig:artbench-style-samples}
\end{figure*}

\begin{figure*}[h]
  \centering
  \includegraphics[width=.98\textwidth]{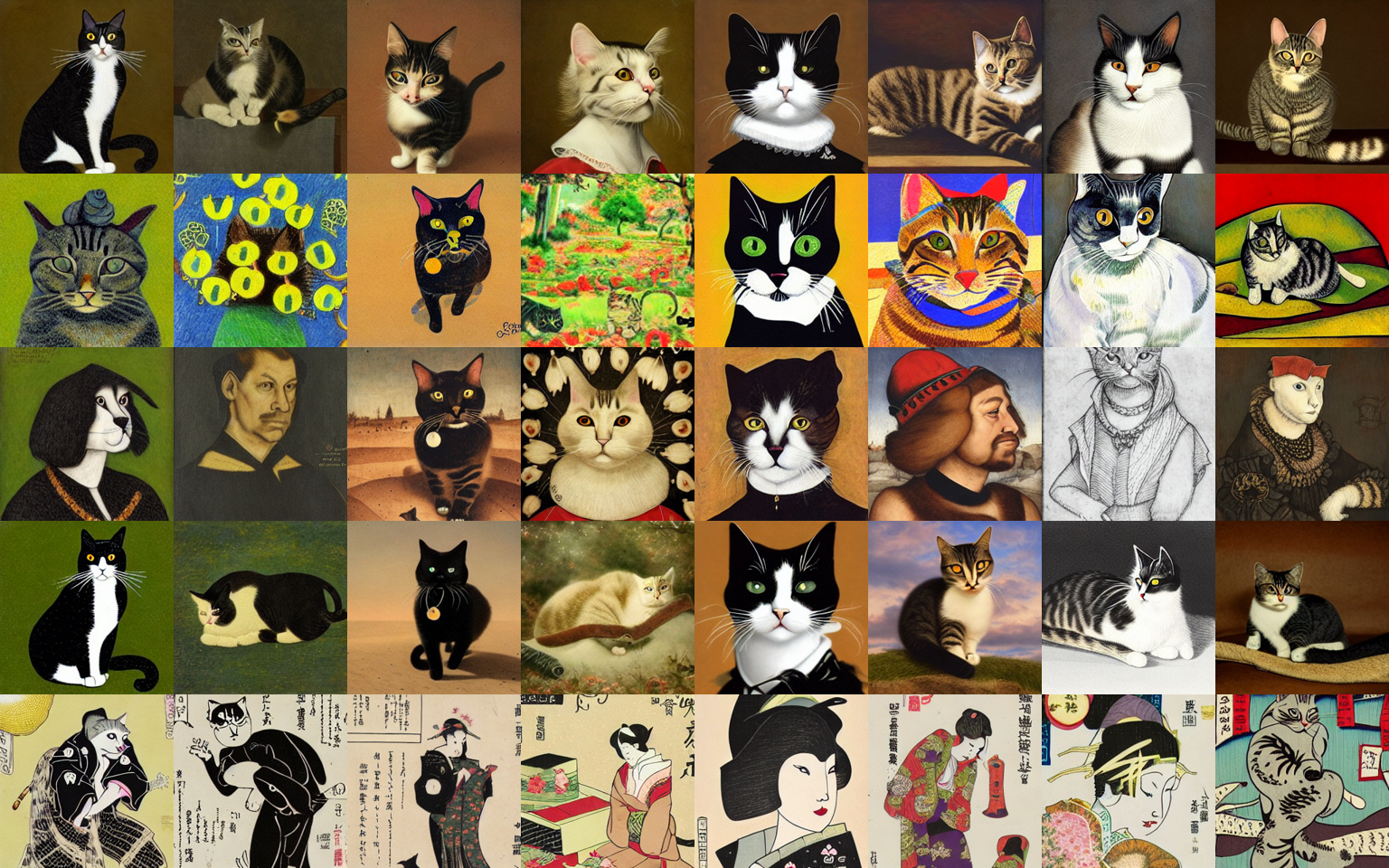}
  \caption{All 40 ArtBench5 cat--style composition generations before the shared validity mask (five styles, eight seeds each). This grid documents query coverage; only queries judged to contain both the requested object and style enter Table~\ref{tab:artbench-composition}.}
  \label{fig:artbench-composition-grid}
\end{figure*}

\begin{figure*}[h]
    \centering

    \begin{minipage}[t]{0.49\textwidth}
        \centering
        \includegraphics[
            width=\linewidth
        ]{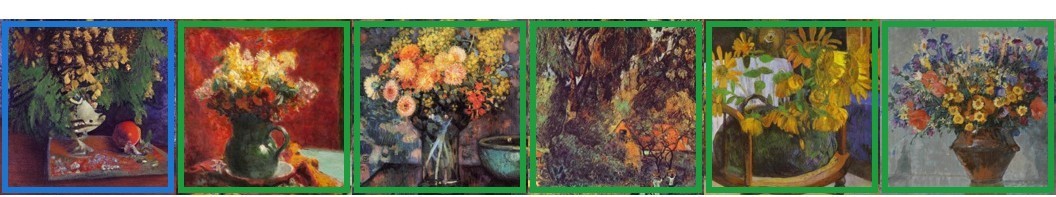}

        \vspace{-1mm}
        \footnotesize \textbf{(a)}
    \end{minipage}
    \hfill
    \begin{minipage}[t]{0.49\textwidth}
        \centering
        \includegraphics[
            width=\linewidth
        ]{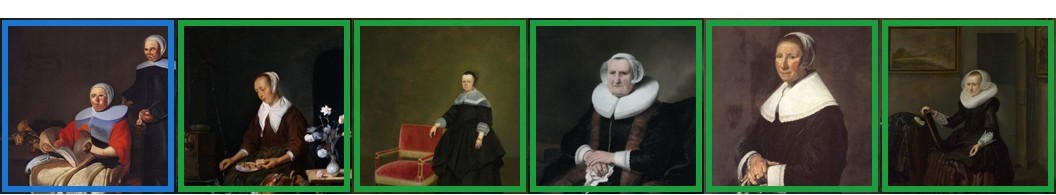}

        \vspace{-1mm}
        \footnotesize \textbf{(b)}
    \end{minipage}

    \vspace{-1mm}
    \caption{
    Representative style retrieval results produced by \method{}.
    \textbf{(a)} ArtBench2, post-impressionism.
    \textbf{(b)} ArtBench5, baroque.
    Blue denotes the generated query and green denotes style-relevant retrieved
    training examples.
    }
    \label{fig:artbench-retrieval}
\end{figure*}

\section{Data Deletion and Retraining on ArtBench2}
\label{app:artbench-deletion}

A central question in data attribution is whether the identified training
support corresponds to actual changes in the learned model when that support is
removed. We examine this question on ArtBench2 through deletion and retraining
at two levels. We first test whether removing a semantically coherent subset of
training data induces factor-specific changes in denoising behavior beyond those
caused by an equal-size random deletion. We then ask whether removing samples
identified as important by \method{} produces systematic changes in the
retraining dynamics and final model distribution. Together, these experiments
connect the response dynamics used by \method{} to observable consequences of
training-data removal.

\subsection{Style-specific changes in denoising error}

We first test whether removing coherent training support produces a
corresponding change in the model's internal denoising behavior. We use a
balanced ArtBench2 subset containing ukiyo-e and post-impressionism images and
compare two interventions: deleting all examples of one style and balanced
random deletion of the same size. Models under all conditions are retrained with
the same backbone and training protocol, so the comparison isolates the
composition of the removed data rather than the deletion magnitude.

We evaluate the retrained models on held-out queries from both styles. To
separate retraining effects from sampling variation, all model conditions are
evaluated at matched noisy query states, denoising timesteps, and noise
realizations. Differences in noise-prediction error therefore reflect changes
in the learned denoising response induced by the training-data intervention.

Let $D_c$ denote deletion of style $c$, $R$ the corresponding size-matched
random-deletion control, and $Q_c$ and $Q_{\neg c}$ the query sets from the
removed and remaining styles, respectively. We summarize the style-specific
effect using
\begin{equation}
\Delta_c =
\big[
\bar{L}^{D_c}(Q_c)-\bar{L}^{R}(Q_c)
\big]
-
\big[
\bar{L}^{D_c}(Q_{\neg c})-\bar{L}^{R}(Q_{\neg c})
\big],
\label{eq:deletion-specificity}
\end{equation}
where $\bar{L}^{A}(Q)$ denotes the average denoising MSE of model condition
$A$ on query set $Q$. A positive $\Delta_c$ indicates that removing style
$c$ affects queries from that style more strongly than queries from the
remaining style after controlling for the reduction in training-set size.

Style deletion produces a clear style-specific change in denoising behavior.
The contrasts are positive in both directions, with
$\Delta_c=1.269\times10^{-3}$ for ukiyo-e and
$\Delta_c=1.458\times10^{-3}$ for post-impressionism. Thus, removing coherent
style support changes the denoising response of corresponding queries more than
an equal-size random deletion, showing that the effect depends on which training
support is removed rather than only on how much data is removed.

Figure~\ref{fig:artbench-deletion-time} further shows that the deletion effect
is not uniform over the denoising process. Although random deletion also
perturbs prediction error, removing coherent style support produces stronger
and temporally structured changes for queries from the corresponding style.
This observation is directly relevant to \method{}: the influence of training
support is expressed through changes in internal denoising responses at
particular stages, motivating attribution based on response trajectories rather
than a single static representation.

\begin{figure}[H]
  \centering
  \includegraphics[width=0.75\textwidth]
  {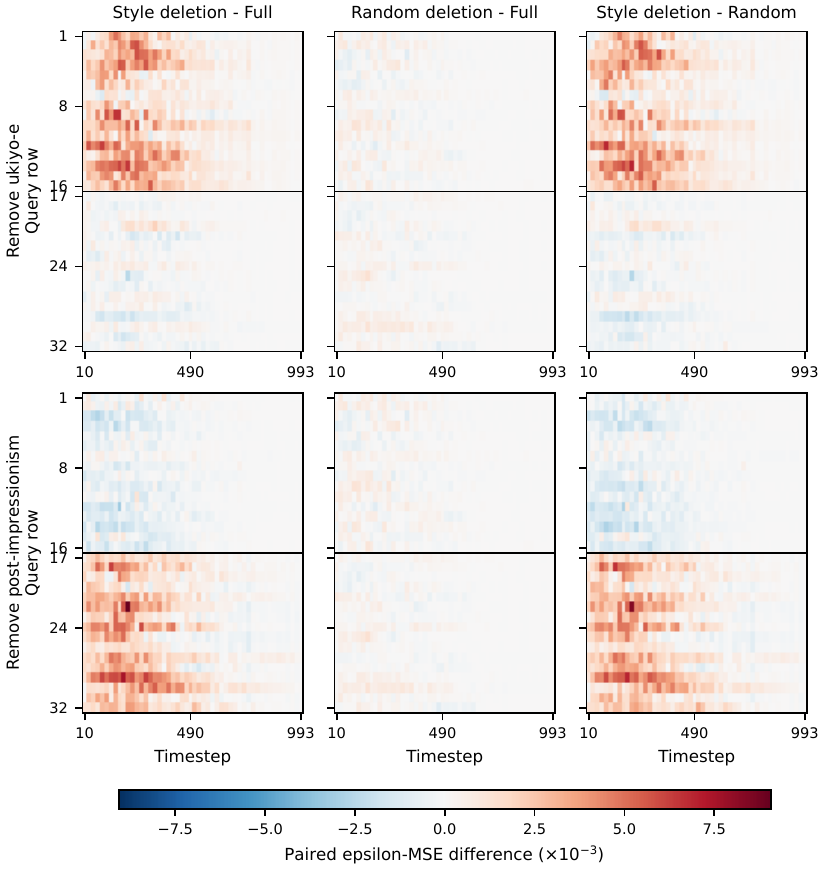}
  \caption{
  Changes in denoising error after style-specific and size-matched random
  deletion. Rows correspond to held-out queries and columns to denoising
  timesteps. The right panels show the style-minus-random contrast, revealing
  stronger and temporally structured changes for queries associated with the
  removed training style.
  }
  \label{fig:artbench-deletion-time}
\end{figure}

This experiment does not evaluate an attribution ranking directly. Instead, it
establishes that semantically coherent changes in the training set induce
corresponding and temporally structured changes in the model's denoising
behavior, providing a model-level basis for tracing training influence through
internal response dynamics.

\subsection{Changes in the distribution of retrained models}
\label{app:artbench-model-distributions}

The preceding analysis establishes that coherent training-data removal produces
structured query-level changes in denoising behavior. We next ask a stronger
attribution question: do the samples assigned high importance by \method{}
correspond to training support whose removal also changes the retrained model
itself? If the response trajectories captured by \method{} reflect meaningful
training support, removing highly attributed samples should produce a more
systematic model-level change than removing the same number of random samples.

We therefore conduct repeated retraining experiments on ArtBench2 under two
deletion strategies: (1) removing samples with the highest attribution scores
assigned by \method{}, and (2) balanced random deletion with the same deletion
ratio. The random condition controls for the generic effect of reducing the
training set, allowing differences between the two conditions to reflect the
composition of the removed samples.

Figure~\ref{fig:artbench-model-distribution} examines the resulting retraining
behavior from several complementary views. The centered loss profiles characterize
how the local training objective changes under the two deletion strategies, the
projected optimization paths show how retraining evolves from the shared
initialization, and the final distributions summarize where the independently
retrained models converge.

\begin{figure}[H]
  \centering
  \includegraphics[
    width=0.95\textwidth,
    height=0.62\textheight,
    keepaspectratio
  ]{assets/artbench_distribution_20260922_four_components.pdf}
  \caption{
  CADT deletion changes distribution of retrained ArtBench2 models.
  }
  \label{fig:artbench-model-distribution}
\end{figure}

Figure~\ref{fig:artbench-model-distribution} shows a consistent distinction
between \method{}-selected and random deletion. Removing highly attributed
samples produces a larger and more coherent displacement of the optimization
trajectory and shifts the distribution of converged models, whereas random
deletion leads to smaller and less structured changes under the same deletion
budget. This behavior is consistent with the role of the response trajectories
in \method{}: samples receiving high attribution scores provide coherent support
for the target-dependent denoising responses, so removing them produces a
systematic change in the learned model rather than only stochastic retraining
variation.

Together, the two deletion experiments provide complementary evidence for the
attribution mechanism of \method{}. Style-level deletion shows that coherent
training support produces factor-specific and temporally structured changes in
denoising behavior, while attribution-guided deletion shows that the samples
identified by \method{} also induce measurable changes in retraining dynamics
and the final model distribution. These results connect the internal response
trajectories used for attribution to actual consequences of modifying the
training data.

\section{Similar Concept-TRAK Scores with Different Denoising Structure}
\label{app:concept-scalar}

A scalar attribution score summarizes the overall contribution of a training
example, but does not explicitly preserve how that contribution is distributed
throughout denoising. We therefore ask whether training examples receiving
nearly identical Concept-TRAK scores can nevertheless exhibit different
temporal contribution patterns and internal denoising responses. This provides
a direct test of whether the dynamic information modeled by \method{} contains
structure beyond a final scalar attribution value. We study this question on
the CIFAR-100 fine class \emph{butterfly}.

We compute Concept-TRAK \citep{park2025concept} using its training-example and
concept-utility gradients. For each generated query, we select same-class
training examples with closely matched positive attribution scores. Pair
selection depends only on the final Concept-TRAK score and does not use
temporal contributions, internal responses, or image appearance. This procedure
produces 12 disjoint scalar-matched pairs.

\begin{figure}[H]
  \centering
  \includegraphics[width=0.75\textwidth]
  {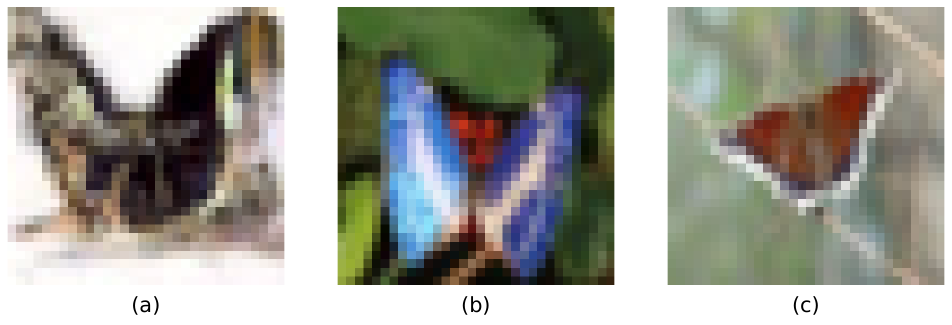}
  \caption{
  Scalar-matched examples for fine-class attribution on CIFAR-100.
  (a) A generated butterfly query.
  (b) and (c) Two butterfly training examples with nearly equal estimated
  Concept-TRAK scores. The pair is selected by score proximity under a fixed
  utility-time sample.
  }
  \label{fig:concept-scalar-pair}
\end{figure}

Figure~\ref{fig:concept-scalar-pair} shows one representative scalar-matched
pair. Although the two training examples receive nearly identical
Concept-TRAK scores for the same query, equality of the final score does not
imply that the underlying attribution evidence is accumulated in the same way.
We next examine how these similar totals are formed over denoising time.

To expose the temporal structure hidden by the scalar aggregation, we decompose
each Concept-TRAK score into query- and training-timestep contributions. Using
the same projected-gradient comparison as Concept-TRAK, the scalar score can be
written as
\begin{equation}
s_q(i)
=
\sum_{t\in\mathcal T_q}
\sum_{u\in\mathcal T_i}
C_i(t,u),
\label{eq:concept-time-decomposition}
\end{equation}
where $C_i(t,u)$ denotes the contribution associated with query timestep $t$
and training timestep $u$.

\begin{figure}[!t]
  \centering
  \includegraphics[width=0.7\textwidth]
  {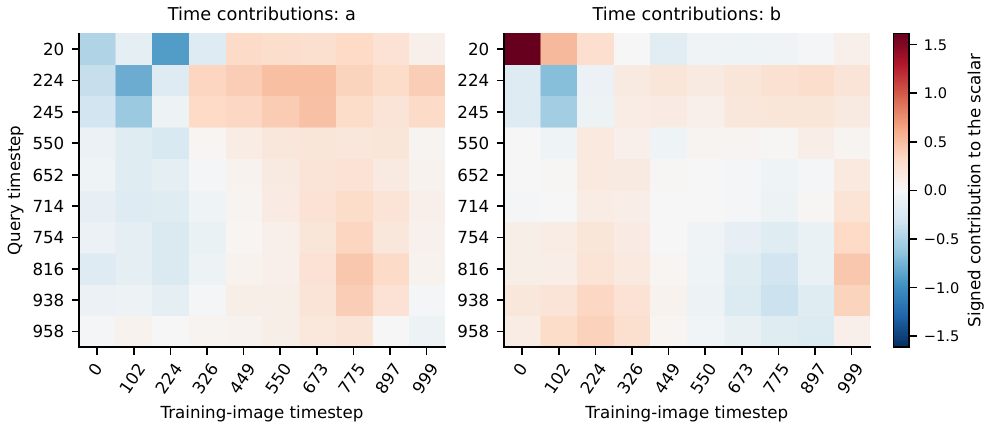}
  \caption{
  Query-time by training-time contributions to the Concept-TRAK score.
  The two maps have nearly equal total sums but exhibit different temporal
  structure.
  }
  \label{fig:concept-scalar-time}
\end{figure}

Figure~\ref{fig:concept-scalar-time} shows that nearly identical scalar scores
can arise from substantially different allocations of contribution across
query and training timesteps. Thus, the final sum preserves overall attribution
strength but discards when that evidence is expressed during denoising.

\begin{figure}[h]
  \centering
  \includegraphics[width=.6\textwidth]
  {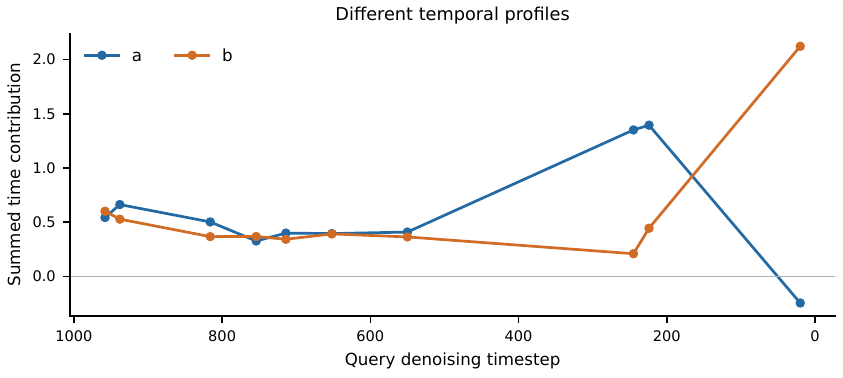}
  \caption{
  Query-time attribution profiles obtained by aggregating the contribution
  maps over training timesteps. Similar scalar scores can arise from different
  distributions of attribution across denoising.
  }
  \label{fig:concept-scalar-profile}
\end{figure}

Aggregating the contribution maps over training timesteps produces the
query-time profiles in Figure~\ref{fig:concept-scalar-profile}. The two
scalar-matched examples again exhibit different temporal profiles, making clear
that the same total score can summarize distinct patterns of attribution
support along the denoising process.

We next ask whether this temporal difference is also reflected in the actual
generation dynamics rather than only in the decomposition of the attribution
score.

Figure~\ref{fig:concept-scalar-denoising} shows that the two examples follow
different reconstruction progressions despite their similar final attribution
scores. This suggests that scalar agreement does not imply identical behavior
throughout the denoising trajectory.

\begin{figure}[h]
  \centering
  \includegraphics[width=0.7\textwidth]
  {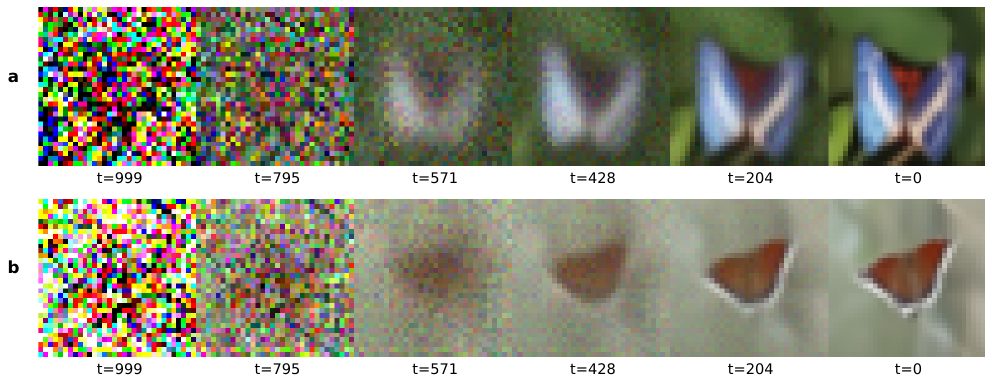}
  \caption{
  Predicted clean images along the reconstruction trajectories of two
  scalar-matched training examples. The examples exhibit different
  denoising progressions despite their similar Concept-TRAK scores.
  }
  \label{fig:concept-scalar-denoising}
\end{figure}

We further examine the model's internal responses. At each noisy state, we
contrast the fine-class and null conditions using the same input, obtaining a
local condition-specific response. Recording these responses across U-Net
blocks and timesteps yields a layer--time description of how the target concept
is expressed internally.

\begin{figure}[h]
  \centering
  \includegraphics[width=0.7\textwidth]
  {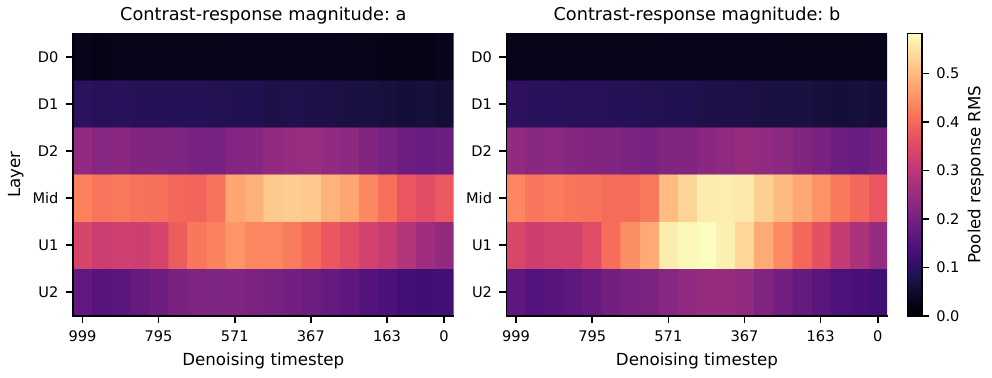}
  \caption{
  Layer--time magnitudes of the fine-class versus null-conditioned response
  for the two scalar-matched examples. Responses are evaluated at identical
  noisy states within each example.
  }
  \label{fig:concept-scalar-response}
\end{figure}

Figure~\ref{fig:concept-scalar-response} reveals distinct internal response
patterns across both network depth and denoising time. These differences are
precisely the type of information retained by \method{}: instead of collapsing
the interaction into a single value, the factor-specific responses are
organized along the denoising trajectory and their evolution is explicitly
modeled.

To verify that the observed differences are not explained only by stochastic
noise, we compare normalized response distances between scalar-matched examples
with within-example variation under different noise realizations. For the
displayed pair, the between-example distance is $0.587$, compared with $0.452$
for the within-example reference. Across all 12 matched pairs, the corresponding
median distances are $0.656$ and $0.451$, respectively. The larger
between-example distances indicate that scalar-matched examples retain
systematic differences in their internal denoising responses beyond ordinary
noise-induced variation.

Overall, these results show that similar Concept-TRAK scores can summarize
substantially different temporal contributions, reconstruction progressions,
and internal response trajectories. The scalar score remains useful as an
overall measure of attribution strength, but its aggregation removes information
about when and how the training example interacts with the denoising process.
This distinction motivates the dynamic representation in \method{}, which
retains the evolution of factor-specific responses rather than reducing them to
a single scalar interaction.

\end{document}